\newcommand{\ltk}[1]{\textcolor{black}{{#1}}}
\newcommand{\stitle}[1]{\noindent{\bf #1}}
\newcommand{\eat}[1]{}
\newcommand{\stab}{\rule{0pt}{2pt}\\[-2.5ex]}
\newcommand{\ie}{\emph{i.e.,}\xspace}
\newcommand{\lt}{\emph{LT-MILP}}
\newcommand{\rc}{\emph{RC-MILP}}
\newcommand{\ours}{\emph{SC-MILP}}
\theoremstyle{plain}
\newtheorem{theorem}{Theorem}[section]
\theoremstyle{definition}
\newtheorem{definition}[theorem]{Definition}
\theoremstyle{remark}
\icmltitlerunning{Submission and Formatting Instructions for ICML 2026}
\begin{document}

\twocolumn[
  \icmltitle{Dynamic Stratified  Contrastive Learning with Upstream Augmentation for  MILP Branching}



  \icmlsetsymbol{equal}{*}
  


  \begin{icmlauthorlist}
    \icmlauthor{Tongkai Lu}{bh}
    \icmlauthor{Shuai Ma}{bh}
    \icmlauthor{Chongyang Tao}{bh}
  \end{icmlauthorlist}

  \icmlaffiliation{bh}{SKLSDE Lab, Beihang University, Beijing, China}

  \icmlcorrespondingauthor{Shuai Ma}{shuaima@buaa.edu.cn}

  \icmlkeywords{Machine Learning, ICML}

  \vskip 0.3in
]



\printAffiliationsAndNotice{}  

\begin{abstract}
  Mixed Integer Linear Programming (MILP) is a fundamental class of NP-hard problems that has garnered significant attention from both academia and industry.
The Branch-and-Bound (B\&B) method is the dominant approach for solving MILPs and the branching plays an important role in B\&B methods. 
Neural-based learning frameworks have recently been developed to enhance branching policies and the efficiency of solving MILPs.
However, these methods still struggle with semantic variation across depths, the scarcity of upstream nodes, and the costly collection of strong branching samples. 
To address these issues, we propose \ours, a Dynamic \underline{\textbf{S}}tratified  \underline{\textbf{C}}ontrastive Training Framework for \underline{\textbf{MILP}} Branching.
It groups branch-and-bound nodes based on their feature distributions and trains a GCNN-based discriminative model to progressively separate nodes across groups, learning finer-grained node representations throughout the tree.
To address data scarcity and imbalance at upstream nodes, we introduce an upstream-augmented MILP derivation procedure that generates both theoretically equivalent and perturbed instances.
\ours~effectively models subtle semantic differences between nodes, significantly enhancing branching accuracy and solving efficiency, particularly for upstream nodes.
Extensive experiments on standard MILP benchmarks demonstrate that our method enhances branching accuracy, reduces solving time, and generalizes effectively to unseen instances.
\end{abstract}

\section{Introduction}

Mixed-Integer Linear Programming (MILP) constitutes a fundamental class of combinatorial optimization problems that integrates discrete and continuous decision variables under linear constraints~\cite{benichou1971experiments}. It not only plays an important role in solving other combinatorial optimization problems, but also serves as a powerful modeling framework for diverse real-world applications, including cloud manufacturing, facility location, task scheduling in Web services, network design, delivery routing, and strategic planning~\cite{zhao2025dual, wang2024fair,guo2025unified,floudas2005mixed, yan2024robust, gao2024lightweight,vielma2015mixed, ashouri2013optimal,bellabdaoui2006mixed,soylu2006synergy,godart2018milp,liu2023mata}. 
The dominant approach for solving MILPs is the Branch-and-Bound (B\&B) algorithm, an exact tree search framework that recursively partitions the feasible space into subproblems~\cite{land1960automatic}.
The algorithm proceeds by building a search tree where each node\footnote{In this paper, ``node'' refers specifically to a B\&B tree node (a MILP subproblem/training instance), while ``vertex'' refers to a node in the MILP's bipartite graph.} in the B\&B tree represents a subproblem defined by additional branching constraints.

In B\&B, variable selection (branching) is the most critical step, as it determines which variable to branch on at each node and thereby dictates the overall structure of the search tree~\cite{fischetti2003local}. Consequently, the quality of branching decisions has a direct impact on both the number of nodes explored and the total solving time.
Traditional variable selection methods rely heavily on expert-crafted rules and are computationally expensive, making them inflexible and time-consuming, especially for large-scale MILPs.
More recently, motivated by the success of neural network methods in other combinatorial optimization domains, researchers have begun exploring their application to variable selection.
\citet{gasse2019exact} map the MILP problem as a bipartite graph and employs Graph Convolutional Neural Networks (GCNNs) to extract features of candidate variables for branching. It adopts an imitation learning approach, treating each node of the B\&B tree as a training sample, specifically targeting strong branching~\cite{applegate1995finding}, which is widely adopted for generating minimal B\&B trees. The study demonstrated notable performance in solving MILPs and set the trend of leveraging neural networks to improve branching strategies~\cite{alvarez2017machine,burges2010ranknet,seyfi2023exact,gupta2020hybrid, zarpellon2021parameterizing, khalil2022mip, lin2022learning, li2025towards, huang2024contrastive, lin2024cambranch, wang2024learning}.
\eat{The Branch-and-Bound (B&B) method iteratively partitions a MILP problem into subproblems and organizes the solving process as a B&B tree, progressively reducing the search space through branching and pruning until the optimal solution is found.}
\eat{, and the search proceeds by solving LP relaxations, applying bounding rules, and pruning infeasible or suboptimal nodes} \eat{While these rules have been refined over decades of algorithmic engineering, they either incur high computational cost or fail to generalize well across problem classes. Motivated by recent advances in machine learning, particularly Graph Neural Networks (GNNs), there has been growing interest in replacing or augmenting these handcrafted rules with learned branching policies that can adapt to instance structure.}

\eat{In B\&B, the branching strategy—how the solver selects the variable to branch on at each node—plays a decisive role in determining the size of the search tree and overall solving time. Strong Branching (SB), for example, often yields the smallest trees by tentatively evaluating each candidate variable through two auxiliary LP solves and selecting the one with the greatest bound improvement. However, the high cost of SB makes it impractical for large-scale instances. Learning-based approaches have emerged as a promising alternative, where a model is trained via imitation learning to mimic SB’s decisions. In this setting, each node of the B\&B tree is treated as an individual training sample, with features derived from the LP relaxation and problem structure, and the SB-chosen variable serving as the ground-truth label. Node states are typically encoded as bipartite graphs linking variable and constraint nodes, allowing GNNs to propagate structural information and produce variable scores. This node-level formulation enables learned policies to replace SB during solving, potentially achieving similar decision quality at much lower cost.}

\eat{Despite the progress made by current neural-based MILP methods, significant challenges persist with the imitation learning
paradigm mentioned above.} 
Despite progress made by current neural-based branching strategies, several critical challenges remain unsolved\eat{significant challenges remain, particularly for branching in upstream nodes near the root of the B\&B tree}. 
First, nodes at different depths correspond to distinct solving phases, resulting in systematic semantic shifts, distributional changes, and variations in decision intent as depth increases. This continuous variation ultimately produces large disparities between upstream and downstream nodes. 
Nevertheless, existing neural-based branching methods~\cite{alvarez2017machine,burges2010ranknet,khalil2016learning,gasse2019exact,gupta2020hybrid,zarpellon2021parameterizing,lin2022learning,seyfi2023exact,huang2024contrastive, lin2024cambranch,li2025towards} overlook such depth-dependent differences by treating all nodes uniformly and averaging their representations, which obscures critical variations and ultimately impairs branching accuracy.
Second, the inherent structural imbalance of the B\&B tree yields far fewer upstream nodes than downstream ones. Consequently, training is dominated by downstream nodes, impairing performance on the critical upstream decisions that largely determine solving efficiency.
Third, acquiring strong branching expert samples for imitation learning is time- and resource-intensive~\cite{lin2024cambranch}, further exacerbating the scarcity of high-quality upstream training data. 
\eat{Third, current models exhibit relatively low top-1 branching accuracy but higher top-3 and top-5 accuracies, yet existing methods do not exploit this characteristic to enhance solving efficiency.} \eat{Together, these factors pose significant challenges for learning accurate branching policies at the root and other upstream nodes.}

To address these challenges, we propose \ours, a Dynamic \underline{\textbf{S}}tratified \underline{\textbf{C}}ontrastive Training Framework for \underline{\textbf{MILP}} Branching. 
Specifically, the framework groups nodes according to their feature distributions, and the GCNN-based discriminative model is trained to regulate the pairwise separation between nodes from different groups, with separation progressively increasing along the group hierarchy. This design enables the model to capture fine-grained semantic variations throughout the B\&B tree. To further mitigate data scarcity and imbalance at upstream nodes, we introduce an upstream-augmented MILP derivation procedure that systematically generates both theoretically equivalent MILPs and perturbed variants from the original instances. By integrating these strategies, SC-MILP effectively captures subtle semantic differences between nodes, thereby enhancing branching accuracy, particularly for upstream nodes.
The main contributions of this work are as follows:

\stab(1) We propose \ours, a novel training framework that initially groups nodes based on their feature distributions and subsequently trains a GCNN-based discriminative model to effectively capture semantic variation in the Branch and Bound tree.

\stab(2) We design an \emph{upstream-augmented MILP derivation} procedure that systematically generates both theoretically equivalent MILPs and perturbed variants from the original instances to address data scarcity and imbalance at upstream nodes without the extra cost of collecting strong branching expert samples.

\stab(3) We present a \emph{dynamic stratified contrastive learning} that contrasts nodes within and across groups, with separation progressively increasing along the group hierarchy. This enables the learning of finer-grained node representations, leading to more informed and effective branching decisions.

\stab(4) Our \ours~significantly improves MILP solving efficiency, outperforming all traditional branching strategies and neural-based methods. Moreover, it enhances branching node prediction accuracy, with particularly pronounced gains at upstream nodes.
\vspace{-1mm}
\section{Related Work}
\subsection{Supervised learning based methods}
Supervised learning methods imitate heuristic branching rules through training predictive models. 
The supervised methods can be broadly categorized into those based on strong branching and those based on alternative heuristics.

\stitle{Strong Branching-based Methods.}
Strong branching~\cite{applegate1995finding} provides highly effective decisions but is computationally prohibitive.
To approximate it, early works~\cite{burges2010ranknet,alvarez2017machine,marcos2014supervised,khalil2016learning} train machine learning models with strong branching labels, but with relatively low branching accuracy.
\citet{khalil2016learning} proposes a GCNN-based framework that represents MILPs as bipartite graphs and predicts branching scores.
Building on this, FILM~\cite{gupta2020hybrid} applies GNNs at upstream nodes and MLPs for others to enhance efficiency, TGAT~\cite{seyfi2023exact} leverages temporal bipartite attention to improve branching accuracy, \citet{chen2024rethinking} employs a second-order folklore GNN to capture complex structures but at a higher computational cost, and MILP-Evolve~\cite{li2025towards} uses LLMs to generate diverse MILPs, enhancing generalization but reducing performance on individual problems.
CAMBranch~\cite{lin2024cambranch} addresses label scarcity via variable-shifting augmentation and contrastive learning between MILPs and AMILPs; however, its reliance on a single augmentation strategy limits data diversity, thereby increasing overfitting risk and hindering the model’s ability to capture complex structural and mathematical patterns.

\stitle{Other Branching Strategies.} 
TreeGate~\cite{zarpellon2021parameterizing} learns reliable pseudo-cost branching via a feature-gated DNN combining both B\&B tree and variable features. Building on this, \citet{pei2023learn} employ an attention-based framework to better exploit variable interactions, MIP-GNN~\cite{khalil2022mip} estimates solution quality using variable deviations, and T-BranT~\cite{lin2022learning} combines a Transformer encoder with graph attention network to capture variable features and global structure. 

\subsection{Reinforcement Learning based Methods}
Reinforcement learning–based methods model branching as a finite-horizon Markov Decision Process (MDP). 
FMSTS~\cite{etheve2020reinforcement} uses deep Q-learning with subtree size as the value function. \citet{scavuzzo2022learning} propose a tree-structured MDP aligned with the search tree, while \citet{zhang2022deep} combine PPO with a GCNN-based value network to guide MCTS for global action optimization. Retro~\cite{parsonson2023reinforcement} converts the search tree into trajectories for efficient learning, and SORREL~\cite{feng2025sorrel} leverages Self-Imitation Learning (SIL) to learn from past high-quality trajectories, improving convergence and exploration.
However, reinforcement learning–based branching remains limited by sparse rewards and credit assignment, performing well only on small MILPs.
Unlike the aforementioned methods that directly learn branching decisions, Symb4CO~\cite{kuang2024rethinking} and GS4CO~\cite{kuang2024towards} leverage reinforcement learning to ``invent'' a good branching algorithm and integrate symbolic learning to improve their reliability.


However, current neural methods often overlook the subtle semantic differences between nodes across depths and data scarcity at the upstream, which limits the model’s ability to learn distinct data representations necessary for accurate branching and effective MILP solving.
Although a few methods, such as CAMBranch~\cite{lin2024cambranch}, acknowledge the high cost of obtaining strong branching samples, they apply uniform augmentation across all nodes, failing to address the imbalance of upstream data. Furthermore, relying on a single augmentation limits their exploitation of MILPs’ structural and mathematical properties, making the model more susceptible to overfitting.
To address these limitations, we propose \ours, a dynamic stratified contrastive training framework, which is specifically designed to capture fine-grained semantic variations across the B\&B tree, thereby increasing branching accuracy and solving efficiency. 
To mitigate upstream data scarcity and imbalance, we introduce an upstream-augmented MILP derivation procedure, which systematically generates both theoretically equivalent MILPs and perturbed variants from the original instances. It provides greater diversity, reduces overfitting, and better exploits structural and mathematical information of MILP instances.

\eat{Supervised learning approaches based on strong branching approximate it with machine learning models by generating datasets using strong branching and then training neural networks to achieve faster computation. Alvarez et al. [64] first designed a machine learning model to simulate strong branching and experimentally demonstrated its effectiveness. Later, Khalil et al. [65] extended this work by modeling the branching process as a ranking problem, selecting the top-ranked variable for branching, and reformulating the score learning task as a classification problem solvable by an SVM classifier. Moreover, inspired by the hybrid strong–pseudocost branching idea, Khalil et al. applied strong branching at the first 500 nodes to collect features and outcomes for model training, and then used the trained ML model for subsequent branching. However, this method still required extensive feature computation at each branching step, resulting in high overhead. To address this, Gasse et al. [66] represented mixed-integer linear programs (MILPs) as bipartite graphs, where nodes correspond to variables and constraints, and edges denote variable–constraint relationships. They employed graph neural networks (GNNs) to learn strong branching scores and make decisions. Gupta et al. [67] proposed a hybrid framework (Figure 8) that combines the expressive power of GNNs with the computational efficiency of multilayer perceptrons (MLPs). In their design, the GNN model with bipartite graph representation is applied only at the root node, while MLPs are used at the remaining nodes, successfully balancing solution quality and computation time.}

\eat{\citet{benichou1971experiments} proposed pseudocost branching (PC), which uses historical branching scores to guide current decisions. Although effective in deeper parts of the search tree, PC suffers from initialization difficulties, since each variable’s pseudocost must be set, causing inefficiency or extra tuning. To address this, \citet{applegate1995finding} introduced strong branching (SB), whose core idea is to simulate branching on all candidate variables, solve the linear relaxations of the resulting subproblems, and then evaluate the impact of each candidate on the objective value. Strong branching produces high-quality decisions but is computationally expensive, making it impractical for large-scale use.

\citet{linderoth1999computational} combined the strengths of pseudocost and strong branching by proposing the hybrid strong and pseudocost (HSB) strategy: strong branching is used in the upper part of the tree, and after a certain depth, pseudocosts are initialized and updated using historical strong branching scores. This approach alleviates the initialization issue of pseudocosts while avoiding the excessive cost of applying strong branching at every node. However, it requires manual specification of the depth at which to switch strategies.

Building on this, \citet{achterberg2005branching} proposed reliability branching (RB). In RB, each variable is associated with a threshold: if a variable is not yet initialized or has been evaluated fewer times than the threshold, strong branching is applied; otherwise, pseudocost branching (initialized by strong branching scores) is used. By dynamically choosing between SB and PC based on pseudocost reliability, RB provides greater flexibility and robustness compared to the fixed-depth HSB strategy. However, selecting appropriate thresholds still requires expert knowledge and extensive experimentation.}
\section{Preliminaries}
\begin{figure*}[t]
 \centering
 \includegraphics[scale=.23]{./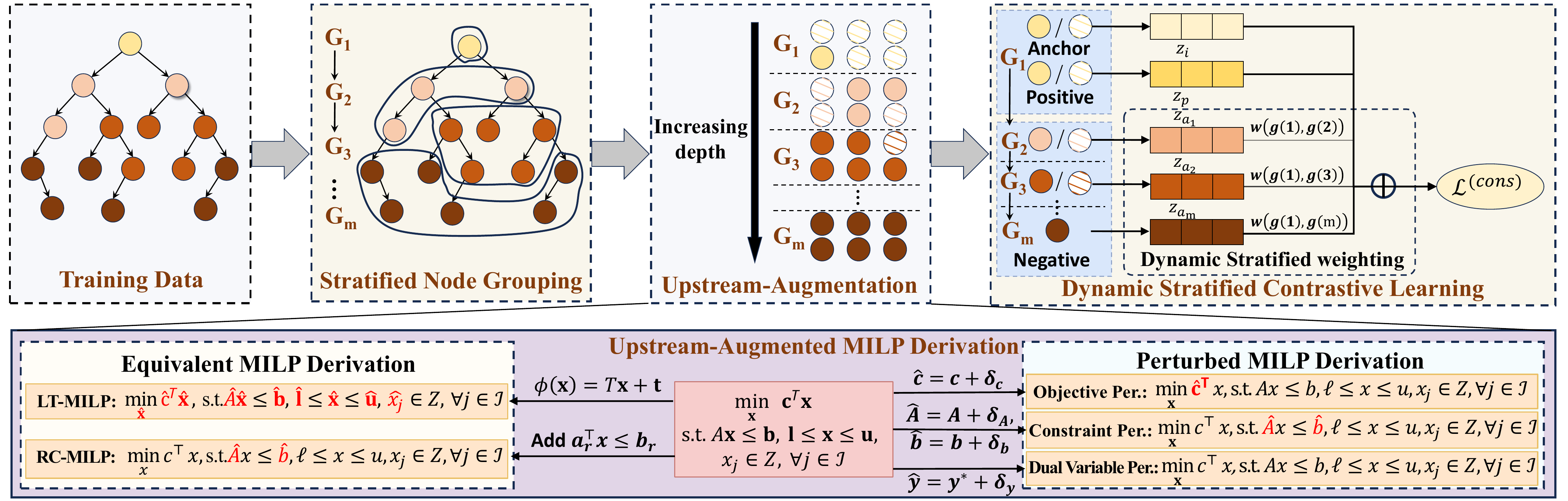}
 \vspace{-2mm}
 \caption{Overview of our \ours. Each node in the B\&B tree is color-coded to reflect feature variations across depths, while nodes marked with slashes denote additional samples generated via upstream-augmented MILP derivation. Nodes are first grouped into feature-driven strata ($G_1, G_2, \dots$). Both equivalent and perturbed MILP derivations are then applied to augment upstream samples. Specifically, \lt~ and \rc~ refer to derivations based on linear transformation and redundant constraint generation, respectively, whereas objective Per., constraint Per., and dual variable Per. indicate perturbations applied to the objective, constraints, and dual variables. Finally, dynamic stratified contrastive learning is performed, where positives are defined within the same stratum and negatives across different strata, with dynamic stratified weighting that progressively increases separation with group depth and is adaptively modulated during training.}
  \vspace{-4mm}
 \label{framework}
\end{figure*}

\stitle{Mixed Integer Linear Programming (MILP).}
We consider the standard form of a Mixed Integer Linear Program (MILP), defined as follows:
\begin{equation}
\setlength\abovedisplayskip{2pt}
\setlength\belowdisplayskip{2pt}
\label{eq:milp}
\begin{aligned}
\min_{{\mathbf{x}}} \ {\mathbf{c}}^{\mathsf{T}} {\mathbf{x}}
\quad \text{s.t.} \quad {A} {\mathbf{x}} \le {\mathbf{b}}, \ {\mathbf{l}} \le {\mathbf{x}} \le {\mathbf{u}}, \ {x}_j \in \mathbb{Z}, \ \forall j \in \mathcal{I},
\end{aligned}
\end{equation}
where \( \mathbf{x} = \{x_1, ..., x_n \} \in \mathbb{R}^{n \times n} \) is the decision variable vector, \( \mathbf{c} \in \mathbb{R}^n \) is the objective coefficient vector, \( A \in \mathbb{R}^{q \times n} \), \( \mathbf{b} \in \mathbb{R}^q \) define the system of linear inequality constraints, \( \mathbf{l} \in (\mathbb{R} \cup \{-\infty\})^n \), \( \mathbf{u} \in (\mathbb{R} \cup \{+\infty\})^n \) are the lower and upper bounds and \( \mathcal{I} \subseteq \{1, 2, \dots, n\} \) is the index set of variables constrained to be integers.

\stitle{Branching for MILPs.}
In a Branch-and-Bound (B\&B) tree $\mathcal{T}$, each node $s$ represents a subproblem with a candidate set $\text{Cand}(s)$.
The goal is to learn a scoring function to evaluate the utility of branching on $x \in \text{Cand}(s)$.
\begin{equation}
 \setlength\abovedisplayskip{2pt}
\setlength\belowdisplayskip{2pt}
    \psi(s, x) : \mathcal{S} \times \text{Cand}(s) \rightarrow \mathbb{R},
\end{equation}
where $\mathcal{S}$ denotes the set of all nodes in $\mathcal{T}$. 
The optimal branching candidate is then chosen as
\begin{equation}
    \setlength\abovedisplayskip{2pt}
\setlength\belowdisplayskip{2pt}
x^\ast(s) = \arg\max_{x \in \text{Cand}(s)} \psi(s, x).
\end{equation}

\stitle{Bipartite Graph Representation of MILP Instances.}
In this paper, we follow a widely used setting in the MILP branching literature~\cite{alvarez2017machine,burges2010ranknet,khalil2016learning,gasse2019exact,gupta2020hybrid,zarpellon2021parameterizing,lin2022learning,seyfi2023exact,lin2024cambranch,li2025towards}, treating each node in the B\&B tree as a distinct training instance for branch learning. 
We model each MILP instance as a bipartite graph $(\mathcal{G}, C, E, V)$, where $\mathcal{G}$ encodes the bipartite structure: an edge $(i, j) \in E$ exists if variable $x_j$ appears in constraint $i$. $C \in \mathbb{R}^{|C| \times d_1}$ and $V \in \mathbb{R}^{|V| \times d_2}$ store constraint and variable vertex features, respectively, and $E \in \mathbb{R}^{|C| \times |V| \times d_3}$ represents edge features. 
The details of these features are summarized in Appendix~\ref{sec:feature}.
\eat{The input $s_i = (\mathcal{G}, C, E, V)$ is then fed into a Graph  Convolutional Neural Network (GCNN)\footnote{We adopt the standard GCNN, which remains widely used due to its strong performance and fast inference speed. Compared with more complex models such as GAT, GCNN enables faster branching decisions, resulting in higher overall efficiency.} for message passing between variable and constraint vertices, producing embeddings for all variables and constraints.
A multilayer perceptron layer (MLP) predicts a score $\hat{s}_j$ for each candidate variable, and the highest-scoring variable $a_i$ is selected during inference.
To optimize the network, cross-entropy is used as a supervised learning loss function
\begin{equation}
\setlength\abovedisplayskip{2pt}
\setlength\belowdisplayskip{2pt}
    \mathcal{L}^{(\text{sup})} = -\frac{1}{N} \sum_{(s_i, a_i^*) \in \mathcal{D}} \log \pi_\theta(a_i^* \mid s_i).
\end{equation}
where $a_i^*$ is the branch decision of strong branching under $s_i$, $\mathcal{D}$ is the training sets, $N = |\mathcal{D}|$ and $\theta$ is the parameters of the GCNN.}

\section{Method}

\subsection{Overview}

In this section, we present \ours, a dynamic stratified contrastive training framework designed for accurate branching and efficient MILP solving, with an overview shown in Fig.~\ref{framework}. \eat{We first introduce stratified node grouping to partition nodes into feature-driven strata, reflecting both structural properties and stage-specific characteristics.}
We first introduce stratified node grouping to partition nodes into feature-driven strata, capturing continuous structural and solving-stage variations for subsequent steps. To address upstream scarcity and imbalance, we then employ upstream-augmented MILP derivation, generating both theoretically equivalent MILPs (via linear transformations of variables and constraints) and perturbed MILPs (lightweight perturbations applied to objectives, constraints, and dual variables), without incurring additional data collection overhead, while simultaneously enriching the training distribution. Finally, we apply dynamic stratified contrastive learning to leverage semantic and feature variations across the tree, thereby improving node representations and branching accuracy. Formally, for each anchor node, nodes within the same group are treated as positives, while nodes from other groups are treated as negatives. We compute pairwise similarities between the anchor node and all other nodes, where dynamic stratified weights are applied to rescale the similarities based on group depth differences. This dynamic stratified contrastive loss progressively preserves intra-group consistency, promotes moderate separation between adjacent strata, and enforces stronger discrimination across distant strata, leading to more informed and effective branching decisions.
\subsection{Stratified Node Grouping}
Considering that nodes at similar depths share comparable solving contexts, we perform \emph{stratified node grouping} that partitions nodes into different strata based on their feature distributions, thereby capturing structural and semantic variations across the B\&B tree and providing fine-grained contextual information for dynamic stratified contrastive learning.

Formally, we partition the node set into $m$ groups, $\{G_1, \dots, G_m\}$, using unsupervised clustering (e.g., K-means\footnote{We adopt K-means as it provides a simple yet effective solution for our task; alternative clustering methods did not yield additional benefits in our experiments.}) based on node features, with the number of groups $m$ determined via the elbow method. This principled grouping establishes a clear stratification that underpins subsequent upstream-augmented MILP derivation and dynamic stratified contrastive learning.

\subsection{Upstream-Augmented MILP Derivation}
Due to the structural imbalance of the B\&B tree, the number of nodes varies significantly across depth, with upstream nodes typically fewer than downstream nodes. This imbalance can negatively impact subsequent dynamic stratified contrastive learning, as the scarcity of upstream nodes limits the model’s ability to capture semantic features at early stages, thereby reducing its discriminative power on critical nodes. After stratified node grouping, the original imbalance becomes more pronounced, appearing as a marked disparity in node counts across groups. A natural solution is to regenerate strong branching expert samples to supplement underrepresented groups. However, this strategy faces two key challenges: (1) generating a sufficient number of samples is computationally expensive, and (2) the newly generated nodes may not faithfully preserve the feature distribution of the original groups, potentially impairing model learning.

To address these issues, we propose an \emph{upstream-augmented MILP derivation} approach that systematically derives new training samples from the original MILP problems rather than relying on additional data collection. Specifically, the equivalence-based derivation applies linear transformations and combinations to produce formally distinct yet equivalent samples in terms of feasible regions and strong branching labels, while the perturbation-based derivation introduces slight modifications to constraints and dual variables to enhance generalization and robustness. Unlike ordinary data augmentation, our method constitutes a strict derivation grounded in the structural properties of the original MILPs with a theoretical foundation and establishes the correspondence between augmented and original bipartite graphs in Appendix~\ref{detalsofaug}, thereby ensuring both fidelity and soundness.

%

\subsubsection{Equivalent MILP Derivation}
\ltk{Intuitively, our goal is to enrich upstream training data without altering the underlying MILP problems. To this end, we leverage the expert knowledge embedded in the original MILPs and introduce two equivalent MILP derivation methods: the linear transformation based derivation (\lt, Definition~1) and the redundant constraint based derivation (\rc, Definition~2).
Each method is rigorously justified through theoretical guarantees (Theorems~\ref{theorem1},~\ref{theorem2}, and~\ref{theorem3}) with formal proofs provided in Appendix~\ref{proofs}.
These ensure a one-to-one correspondence between the feasible regions (including integer variables) of the original and augmented MILPs, as well as between their strong branching outcomes, thereby preserving both fidelity and theoretical rigor.}
\eat{To fully leverage the expert knowledge embedded in the original training MILPs, we propose two equivalent MILP derivation methods, each supported by a theoretical foundation with formal proofs of equivalence. These derivations establish a one-to-one correspondence between the feasible regions (including integer variables) of the original and augmented MILPs, as well as between their strong branching outcomes, thereby ensuring both fidelity and theoretical rigor.}
\vspace{-1mm}
\begin{definition}[Linear transformation based Equivalent MILP Derivation (\lt)] Given an MILP problem in Equ.~\ref{eq:milp}, let \(\phi(\mathbf{x}) = T\mathbf{x} + \mathbf{t}\) be an affine transformation, where $T$ is a special diagonal matrix whose diagonal entries are either 1 or -1 (\(T^{-1} = T\)) and \(\mathbf{t}_{\mathcal{I}}\in\mathbb{Z}^k\). This linear transformation \(\phi(\cdot) \) leads to an equivalent new MILP problem:
\begin{equation}
\label{eq:ltmilp}
\setlength\abovedisplayskip{3pt}
\setlength\belowdisplayskip{3pt}
\begin{aligned}
\min_{\hat{\mathbf{x}}} \ \hat{\mathbf{c}}^{\mathsf{T}} \hat{\mathbf{x}}
\quad \text{s.t.} \quad \hat{A} \hat{\mathbf{x}} \le \hat{\mathbf{b}}, \ \hat{\mathbf{l}} \le \hat{\mathbf{x}} \le \hat{\mathbf{u}}, \ \hat{x}_j \in \mathbb{Z}, \ \forall j \in \mathcal{I}
\end{aligned}
\end{equation}
where \( \hat{\mathbf{c}} = T\mathbf{c} \) is the transformed objective coefficient vector, \( \hat{A} = A T \) is the transformed constraint matrix, \(\hat{\mathbf{b}} = \mathbf{b} + AT\mathbf{t}\) and the left and right bounds \( \hat{\mathbf{l}} = T \mathbf{l} + \mathbf{t} \), \( \hat{\mathbf{u}} = T \mathbf{u} + \mathbf{t}\).
\end{definition}

Next, we present three theorems establishing the equivalence between \lt~ and the original MILP problem, with proofs given in Appendix~\ref{proofs}.
\vspace{-1mm}
\begin{theorem}\label{theorem1}
Let 
\(
\mathcal{P} := \{\, \mathbf{x} \in \mathbb{R}^n \mid A\mathbf{x} \le \mathbf{b},\ \mathbf{l} \le \mathbf{x} \le \mathbf{u} \,\}
\)
be the feasible region of the LP relaxation of Eq.~\ref{eq:milp}, and let
\(
\hat{\mathcal{P}} := \{\, \hat{\mathbf{x}} \in \mathbb{R}^n \mid \hat{A}\,\hat{\mathbf{x}} \le \hat{\mathbf{b}},\ \hat{\mathbf{l}} \le \hat{\mathbf{x}} \le \hat{\mathbf{u}} \,\}
\)
be the feasible region of the LP relaxation of Eq.~\ref{eq:ltmilp}.  
Then: (1) The affine transformation \(\phi(\cdot)\) is a bijection between \(\mathcal{P}\) and \(\hat{\mathcal{P}}\), i.e., the LP feasible domains are in one-to-one correspondence. (2) This bijection \(\phi(\cdot)\) induces a one-to-one correspondence between the sets of optimal solutions of the two LPs.
\end{theorem}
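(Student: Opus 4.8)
The plan is to prove both statements by a direct, fully reversible verification that the affine map $\phi(\mathbf{x}) = T\mathbf{x} + \mathbf{t}$ transports each defining inequality of $\mathcal{P}$ onto the corresponding inequality of $\hat{\mathcal{P}}$. The one structural fact that does all the work is $T^2 = I$ (i.e.\ $T^{-1}=T$), which also gives $T^{\mathsf T}=T$ since $T$ is diagonal; consequently $\phi^{-1}(\hat{\mathbf{x}}) = T\hat{\mathbf{x}} - T\mathbf{t}$ is again affine, so $\phi$ is a bijection of $\mathbb{R}^n$ onto itself. It therefore suffices to show the equivalence $\mathbf{x}\in\mathcal{P} \iff \phi(\mathbf{x})\in\hat{\mathcal{P}}$, and the restriction of $\phi$ to $\mathcal{P}$ is then automatically the desired bijection onto $\hat{\mathcal{P}}$.

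For part (1) I would first handle the linear constraints. Substituting $\hat{\mathbf{x}} = T\mathbf{x}+\mathbf{t}$ and using $\hat{A}=AT$, $\hat{\mathbf{b}}=\mathbf{b}+AT\mathbf{t}$ gives $\hat{A}\hat{\mathbf{x}} = AT(T\mathbf{x}+\mathbf{t}) = A\mathbf{x}+AT\mathbf{t}$, so $\hat{A}\hat{\mathbf{x}}\le\hat{\mathbf{b}}$ is equivalent to $A\mathbf{x}\le\mathbf{b}$ after canceling the common vector $AT\mathbf{t}$ from both sides. Then I would treat the box constraints coordinatewise: for $j$ with $T_{jj}=1$ the bound $\hat{l}_j\le\hat{x}_j\le\hat{u}_j$ reads $l_j+t_j\le x_j+t_j\le u_j+t_j$, i.e.\ $l_j\le x_j\le u_j$; for $j$ with $T_{jj}=-1$ the relevant chain is $-u_j+t_j\le -x_j+t_j\le -l_j+t_j$, again equivalent to $l_j\le x_j\le u_j$. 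Collecting these equivalences yields $\mathbf{x}\in\mathcal{P}\iff\phi(\mathbf{x})\in\hat{\mathcal{P}}$.

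For part (2), the key identity is that the two objectives agree up to an additive constant along this correspondence: with $\hat{\mathbf{c}}=T\mathbf{c}$ and $T^{\mathsf T}=T$, $T^2=I$, one gets $\hat{\mathbf{c}}^{\mathsf T}\phi(\mathbf{x}) = (T\mathbf{c})^{\mathsf T}(T\mathbf{x}+\mathbf{t}) = \mathbf{c}^{\mathsf T}\mathbf{x}+\mathbf{c}^{\mathsf T}T\mathbf{t}$, where $\mathbf{c}^{\mathsf T}T\mathbf{t}$ does not depend on $\mathbf{x}$. Since $\phi$ is an order-preserving relabeling of the feasible points by part (1) and shifts the objective by a fixed constant, $\mathbf{x}^\ast$ minimizes $\mathbf{c}^{\mathsf T}\mathbf{x}$ over $\mathcal{P}$ iff $\phi(\mathbf{x}^\ast)$ minimizes $\hat{\mathbf{c}}^{\mathsf T}\hat{\mathbf{x}}$ over $\hat{\mathcal{P}}$, so $\phi$ carries the optimal set of the first LP bijectively onto that of the second; in the degenerate cases the same constant-shift argument shows the two problems are simultaneously infeasible or simultaneously unbounded, so the correspondence holds vacuously.

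I do not anticipate a genuine obstacle, as this is elementary linear algebra; the only point needing care is the bookkeeping of the box constraints in coordinates where $T_{jj}=-1$. There the labels ``lower'' and ``upper'' in $\hat{\mathbf{l}},\hat{\mathbf{u}}$ must be understood as the componentwise minimum and maximum of $T\mathbf{l}+\mathbf{t}$ and $T\mathbf{u}+\mathbf{t}$ (the sign flip swaps the two bounds), and I would state this convention explicitly before the coordinatewise check so that every step is honestly an ``if and only if''; everything else reduces to reversible substitutions using $T^2=I$.
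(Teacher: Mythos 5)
Your proof is correct and follows essentially the same route as the paper's: both establish that $\phi$ is an affine bijection of $\mathbb{R}^n$ (using $T^{-1}=T$), verify by reversible substitution that each defining constraint of $\mathcal{P}$ maps onto the corresponding constraint of $\hat{\mathcal{P}}$, and observe that the objectives agree up to the additive constant $\mathbf{c}^{\mathsf{T}}T\mathbf{t}$ so that the optimal sets correspond (the paper phrases this last step as a contradiction argument, you as a direct monotonicity argument; the two are interchangeable). The one point where you are more careful than the paper is the coordinatewise treatment of the box constraints: Definition~1 literally sets $\hat{\mathbf{l}} = T\mathbf{l}+\mathbf{t}$ and $\hat{\mathbf{u}} = T\mathbf{u}+\mathbf{t}$, which for $T_{jj}=-1$ yields $\hat{l}_j \ge \hat{u}_j$, so your explicit convention that the transformed lower and upper bounds must be swapped in those coordinates (consistent with Table~\ref{tab:ltmilp}, where \texttt{has\_lb} and \texttt{has\_ub} exchange roles) is needed for $\hat{\mathbf{l}}\le\hat{\mathbf{x}}\le\hat{\mathbf{u}}$ to be meaningful, whereas the paper's proof only asserts that the bounds ``transform similarly.''
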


\vspace{-1mm}
\begin{theorem}\label{theorem2}
For any variable \(x_j\) in the original problem of Equ.~\ref{eq:milp}, the strong branching decisions correspond exactly to those of \(\hat{x}_{\pi(j)}\) in the transformed problem of Equ.~\ref{eq:ltmilp}, where \(\pi\) is the variable mapping induced by \(\phi(\cdot)\).
\end{theorem}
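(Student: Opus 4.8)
The plan is to reduce strong branching equivalence to the bijection already established in Theorem~\ref{theorem1}. Recall that strong branching on a candidate variable $x_j$ at a B\&B node with LP relaxation feasible region $\mathcal{P}$ proceeds by: (i) reading off the fractional LP-optimal value $x_j^\ast = \bar{x}_j$, (ii) forming the two child LPs obtained by adding $x_j \le \lfloor \bar{x}_j \rfloor$ (down-branch) and $x_j \ge \lceil \bar{x}_j \rceil$ (up-branch), (iii) solving these two LPs to get objective values $z_j^{\downarrow}, z_j^{\uparrow}$, and (iv) scoring $x_j$ via the usual product/weighted-sum of the gains $z_j^{\downarrow} - z^{\ast}$ and $z_j^{\uparrow} - z^{\ast}$, where $z^\ast$ is the parent LP optimum. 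Since $\pi$ is the identity on indices (the affine map $\phi(\mathbf{x}) = T\mathbf{x} + \mathbf{t}$ acts coordinatewise with $T = \mathrm{diag}(\pm 1)$), I will write $\hat{x}_j = \epsilon_j x_j + t_j$ with $\epsilon_j \in \{+1,-1\}$, and show each of the four ingredients above transforms covariantly.

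First I would handle the LP-optimal point and objective value: by Theorem~\ref{theorem1}(2), if $\bar{\mathbf{x}}$ is the parent LP optimum for Eq.~\ref{eq:milp} then $\hat{\bar{\mathbf{x}}} = \phi(\bar{\mathbf{x}})$ is the parent LP optimum for Eq.~\ref{eq:ltmilp}, and the objective values agree: $\hat{\mathbf{c}}^{\mathsf T}\hat{\bar{\mathbf{x}}} = (T\mathbf{c})^{\mathsf T}(T\bar{\mathbf{x}}+\mathbf{t}) = \mathbf{c}^{\mathsf T}\bar{\mathbf{x}} + \mathbf{c}^{\mathsf T}T\mathbf{t}$, i.e. they differ by the fixed constant $\mathbf{c}^{\mathsf T}T\mathbf{t}$ independent of the branching variable, so all gain differences are preserved. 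Second, the branching variable's fractional value: $\hat{\bar{x}}_j = \epsilon_j \bar{x}_j + t_j$ with $t_j \in \mathbb{Z}$ (since $j \in \mathcal{I}$), so $\lfloor \hat{\bar{x}}_j\rfloor$ and $\lceil \hat{\bar{x}}_j\rceil$ are the images under $u \mapsto \epsilon_j u + t_j$ of $\lfloor \bar{x}_j\rfloor, \lceil\bar{x}_j\rceil$ when $\epsilon_j = +1$, and the swapped pair when $\epsilon_j = -1$ — using the elementary identities $\lfloor -u\rfloor = -\lceil u\rceil$, $\lceil -u \rceil = -\lfloor u\rfloor$, and integrality of $t_j$. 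Hence the constraint $\hat{x}_j \le \lfloor \hat{\bar{x}}_j\rfloor$ pulls back under $\phi$ to either $x_j \le \lfloor \bar{x}_j \rfloor$ (if $\epsilon_j = +1$) or $x_j \ge \lceil \bar{x}_j \rceil$ (if $\epsilon_j = -1$): the two child subproblems are the same pair, possibly with the up/down roles interchanged.

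Third, I would apply Theorem~\ref{theorem1} once more, but to the \emph{child} LPs: adding a bound constraint on a single variable is itself an instance of the $\mathbf{l}\le\mathbf{x}\le\mathbf{u}$ bound structure, and $\phi$ maps one child polytope bijectively onto the other, so the child LP optima are in correspondence and their objective values again differ by the same constant $\mathbf{c}^{\mathsf T}T\mathbf{t}$ (or are both $+\infty$ in the infeasible case, which is likewise preserved). Consequently the two gains $\{z_j^{\downarrow} - z^\ast,\, z_j^{\uparrow} - z^\ast\}$ computed in the transformed problem equal, as an unordered pair, those computed in the original; since the standard strong branching score is symmetric in the two child gains (a product, or a convex combination treating the two branches symmetrically), the score of $\hat{x}_j$ equals the score of $x_j$, for every candidate $j$. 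Therefore the $\arg\max$ over candidates — and in fact the entire ranking — coincides, which is exactly the claimed correspondence of strong branching decisions.

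The main obstacle is bookkeeping the sign flips $\epsilon_j = -1$ cleanly: one must verify that a $-1$ entry swaps the up-branch and down-branch of that variable while leaving the candidate \emph{set} and all \emph{scores} invariant, and that this swap does not interfere with the claim when several variables are simultaneously negated. I expect this to be purely a matter of carefully composing the floor/ceiling identities with integrality of $\mathbf{t}_{\mathcal I}$ and invoking the symmetry of the scoring rule; no new idea beyond Theorem~\ref{theorem1} is needed, but the sign-flip case analysis is where an informal argument could go wrong.
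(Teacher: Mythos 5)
Your proposal follows essentially the same route as the paper's proof: apply the bijection and objective correspondence of Theorem~\ref{theorem1} to the parent LP and to each of the two child LPs, conclude that the bound improvements $\Delta_j^{\downarrow},\Delta_j^{\uparrow}$ and hence the strong branching scores coincide, and therefore that the decisions correspond. You are in fact more careful than the paper on one point: you explicitly track that when $T_{jj}=-1$ the down- and up-branches of $x_j$ are interchanged under $\phi$ (via $\lfloor -u\rfloor=-\lceil u\rceil$ and integrality of $t_j$) and observe that the symmetry of the score in the two child gains makes this swap harmless, whereas the paper's proof tacitly identifies the down-branch with the down-branch.
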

\vspace{-1mm}
\ltk{Theorems~\ref{theorem1} and \ref{theorem2} together establish the equivalence between the original and the transformed problems in terms of feasible regions and strong branching outcomes. 
To further broaden the theoretical scope and provide more diverse and complex augmented MILPs, we generalize the linear transformation assumption to a more flexible class of affine mappings in the following theorem.}

\vspace{-1mm}
\begin{theorem}\label{theorem3}
Let the index set of integer variables in Eq.~\ref{eq:milp} be
\(\mathcal{I}=\{1,2,\dots,k\}\), and partition vectors and matrices conformably so that 
\[
\setlength\abovedisplayskip{6pt}
\setlength\belowdisplayskip{6pt}
\mathbf{x} = (\mathbf{x}_{\mathcal{I}}^{T}, \mathbf{x}_{\mathcal{F}}^{T})^{T},
\quad
\mathbf{x}_{\mathcal{I}}\in\mathbb{Z}^k, \;
\mathbf{x}_{\mathcal{F}}\in\mathbb{R}^{n-k}.
\]
Assume the transformation matrix
\[
T = \begin{pmatrix} B & 0 \\[0.3em] F & D \end{pmatrix},
\]
where the blocks and the translation vector \(\mathbf{t} = (\mathbf{t}_{\mathcal{I}}^{T}, \mathbf{t}_{\mathcal{F}}^{T})^{T}\) satisfy: (a) \(B\in\mathbb{R}^{k\times k}\) is a signed permutation matrix (thus each row and column has exactly one entry equal to \(\pm 1\) and all others are zero. Hence \(B\) is invertible and \(B^{-1} = B^{\mathsf{T}}\) with integer entries); (b) \(F\in\mathbb{R}^{(n-k)\times k}\) and \(D\in\mathbb{R}^{(n-k)\times(n-k)}\) with \(D\) invertible; (c) \(\mathbf{t}_{\mathcal{I}}\in\mathbb{Z}^k\).
Then we have that (1) The affine map \(\phi(\mathbf{x})=T\mathbf{x}+\mathbf{t}\) is a bijection between the feasible region of the original MILP and that of the transformed MILP, and it preserves the integrality of the integer components: if \(\mathbf{x}_{\mathcal{I}}\in\mathbb{Z}^k\) then \(\hat{\mathbf{x}}_{\mathcal{I}} = B\mathbf{x}_{\mathcal{I}} + \mathbf{t}_{\mathcal{I}}\in\mathbb{Z}^k\), and conversely. (2) The bijection \(\phi\) induces a one-to-one correspondence between optimal solutions of the two MILPs (and of their LP relaxations).
\end{theorem}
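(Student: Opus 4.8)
The plan is to reduce Theorem~\ref{theorem3} to Theorems~\ref{theorem1} and~\ref{theorem2} by exhibiting $\phi$ as a composition of elementary bijective moves, so that almost everything follows by function-composition bookkeeping rather than new linear algebra. First I would verify that $T$ is invertible: since $T$ is block lower-triangular with diagonal blocks $B$ and $D$, we have $\det T = \det B \cdot \det D \ne 0$, and the explicit inverse is
\begin{equation}
\setlength\abovedisplayskip{3pt}
\setlength\belowdisplayskip{3pt}
T^{-1} = \begin{pmatrix} B^{-1} & 0 \\[0.3em] -D^{-1} F B^{-1} & D^{-1} \end{pmatrix},
\end{equation}
with $B^{-1} = B^{\mathsf{T}}$ having integer entries by assumption (a). Hence $\phi(\mathbf{x}) = T\mathbf{x} + \mathbf{t}$ is an affine bijection of $\mathbb{R}^n$ with inverse $\phi^{-1}(\hat{\mathbf{x}}) = T^{-1}(\hat{\mathbf{x}} - \mathbf{t})$.

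Next I would establish the integrality claim (part (1), second half). Because $T$ is block lower-triangular, the integer block transforms autonomously: $\hat{\mathbf{x}}_{\mathcal{I}} = B\mathbf{x}_{\mathcal{I}} + \mathbf{t}_{\mathcal{I}}$, which lies in $\mathbb{Z}^k$ whenever $\mathbf{x}_{\mathcal{I}} \in \mathbb{Z}^k$ since $B$ has integer entries and $\mathbf{t}_{\mathcal{I}} \in \mathbb{Z}^k$ by (c). Conversely, $\mathbf{x}_{\mathcal{I}} = B^{-1}(\hat{\mathbf{x}}_{\mathcal{I}} - \mathbf{t}_{\mathcal{I}}) = B^{\mathsf{T}}\hat{\mathbf{x}}_{\mathcal{I}} - B^{\mathsf{T}}\mathbf{t}_{\mathcal{I}}$, again integer-valued because $B^{\mathsf{T}}$ and $\mathbf{t}_{\mathcal{I}}$ are integral. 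The key point is that the continuous block $\mathbf{x}_{\mathcal{F}}$, which is coupled to $\mathbf{x}_{\mathcal{I}}$ through $F$, never feeds back into $\mathbf{x}_{\mathcal{I}}$, so no integrality obstruction arises. I would then invoke the (routine) computation — already carried out in the proof of Theorem~\ref{theorem1} for the special case — showing that the transformed data $\hat{\mathbf{c}} = T^{-\mathsf{T}}\mathbf{c}$ (or $\mathbf{c}$ substituted appropriately; the exact form follows from $\mathbf{c}^{\mathsf{T}}\mathbf{x} = \hat{\mathbf{c}}^{\mathsf{T}}\hat{\mathbf{x}} + \text{const}$), $\hat{A} = AT$, $\hat{\mathbf{b}} = \mathbf{b} + AT\mathbf{t}$ reproduce exactly the constraint system obtained by substituting $\mathbf{x} = T^{-1}(\hat{\mathbf{x}} - \mathbf{t})$, so $\hat{\mathbf{x}}$ is feasible for the transformed MILP iff $\mathbf{x} = \phi^{-1}(\hat{\mathbf{x}})$ is feasible for the original; combined with the integrality correspondence this gives the bijection of feasible regions in part (1). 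Restricting to LP relaxations (dropping integrality) the same substitution shows the objective values agree up to the additive constant $\mathbf{c}^{\mathsf{T}} T^{-1}(-\mathbf{t})$, which is independent of $\hat{\mathbf{x}}$, so minimizers correspond; and since the integer feasible sets are in bijection and the objectives differ by a constant, optimal solutions of the two MILPs correspond as well, proving part (2).

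For the last reduction — showing that this generalizes Theorems~\ref{theorem1}--\ref{theorem2} and that the signed-permutation generality causes no trouble — I would note that a signed permutation matrix $B$ factors as $B = P \Sigma$ with $P$ a permutation matrix and $\Sigma$ a diagonal $\pm 1$ matrix; the $\Sigma$ part is exactly the situation handled in Definition~1/Theorem~\ref{theorem1}, and the permutation part merely relabels variable indices (this is where the mapping $\pi$ in Theorem~\ref{theorem2} comes from, now extended to the integer block). Since strong branching on a variable depends only on the child LP relaxations obtained by imposing $\hat{x}_j \le \lfloor v \rfloor$ or $\hat{x}_j \ge \lceil v \rceil$ at the fractional LP value $v$, and since $\phi$ maps the original child LPs bijectively onto the transformed child LPs (a sign flip $x_j \mapsto -x_j + t_j$ swaps the floor and ceiling branches but produces the identical pair of subproblems, and an index permutation just renames which variable is branched), the strong branching score of $x_j$ equals that of $\hat{x}_{\pi(j)}$ by the same argument as in Theorem~\ref{theorem2}, now applied blockwise. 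I expect the main obstacle to be purely bookkeeping: carefully tracking how the off-diagonal block $F$ and the translation $\mathbf{t}_{\mathcal{F}}$ propagate through $\hat{A} = AT$ and $\hat{\mathbf{b}}$, and confirming that the strong-branching equivalence still holds when a branch on an integer variable $\hat{x}_j$ induces, via $F$, a shifted bound on the continuous variables — which it does, because that induced shift is itself just the image under $\phi$ of the corresponding original child subproblem, so no new argument is needed beyond composing with Theorem~\ref{theorem2}.
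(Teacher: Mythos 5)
Your proposal matches the paper's proof essentially step for step: block-triangular invertibility of \(T\) with the same explicit inverse, integrality preservation via the signed permutation \(B\) together with integer \(\mathbf{t}_{\mathcal{I}}\), feasibility correspondence by substitution, and optimality correspondence via the objective identity \(\hat{f}(\phi(\mathbf{x})) = f(\mathbf{x})\) plus the standard contradiction argument. Two small remarks: for general \(T\) the substitution \(\mathbf{x} = T^{-1}(\hat{\mathbf{x}}-\mathbf{t})\) yields \(\hat{A} = AT^{-1}\) rather than \(AT\) (the two coincide only when \(T^{-1}=T\) as in Definition~1, so your formula is a harmless carry-over), and your closing paragraph on strong branching is extra material --- Theorem~\ref{theorem3} as stated claims only the feasible-region/integrality and optimal-solution correspondences, not the strong-branching equivalence.
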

\vspace{-2mm}
\stitle{\textbf{Remark.}} Theorem~\ref{theorem3} shows that the equivalence between the original and transformed MILPs holds under a broader class of linear transformations, far beyond the specific cases considered in Definition~1. 
\ltk{In this paper, we only adopt the simpler transformation forms from Definition~1, as they are easier to implement and already provide sufficient diversity for current benchmarks}.

\vspace{-1mm}
\begin{definition}[Redundant Constraint based Equivalent MILP Derivation (\rc) ]Given an MILP problem in Eq.~\ref{eq:milp}, let \(a_i^\top x \le b_i\) and \(a_j^\top x \le b_j\) be two linearly independent constraints; define a new redundant constraint \(a_r^\top x \le b_r\) with \(a_r = a_i + a_j\), \(b_r = b_i + b_j\). Augmenting the MILP with this constraint yields \(\min_{x} \mathbf{c}^\top x\) s.t. \(\hat{A} {\mathbf{x}} \le \hat{b}, {\mathbf{l}} \le {\mathbf{x}} \le {\mathbf{u}}, x_j \in \mathbb{Z}, \forall j \in \mathcal{I}\), where \(\hat{A} = \begin{bmatrix} A \\ a_r^\top \end{bmatrix}, \hat{b} = \begin{bmatrix} b \\ b_r \end{bmatrix}\).
\end{definition}

As the constraints of \rc~are essentially equivalent to those of the original problem, their LP relaxations share exactly the same feasible region. Consequently, they produce identical strong branching results and optimal solutions. Therefore, \rc~serves as an equivalent derivation of the original MILP.

\subsubsection{Perturbed MILP Derivation}
Equivalent sample derivation ensures strict preservation of feasible regions and strong branching outcomes, providing reliable samples for training. To complement this approach and expose the model to a broader variety of problem structures, we introduce a perturbation-based derivation method that generates non-identical MILPs by applying small perturbations to objective coefficients, constraint coefficients, or dual variables. Although these perturbations may slightly alter the branching outcomes, we retain the original strong branching outcomes during training to encourage the model to learn more resilient representations. Specifically, we consider three types of perturbations: objective perturbation, constraint perturbation, and dual variable perturbation, shown in Definition~3 \&~4.
\vspace{-2mm}
\begin{definition}[Objective and Constraint Perturbation] 
Given an MILP in standard form, we consider two types of perturbations to generate derived MILPs:

\textbf{(1) Objective Perturbation:} The objective vector is perturbed as
\(
\hat{\mathbf{c}} = \mathbf{c} + \delta_c,
\)
where \(\delta_c \in \mathbb{R}^n\) is a small perturbation sampled from a Gaussian distribution. The augmented MILP becomes
\[
\setlength\abovedisplayskip{1pt}
\setlength\belowdisplayskip{1pt}
\min_{{\mathbf{x}}} \ {\hat{\mathbf{c}}}^{\mathsf{T}} {\mathbf{x}}
\quad \text{s.t.} \quad {A} {\mathbf{x}} \le {\mathbf{b}}, \ {\mathbf{l}} \le {\mathbf{x}} \le {\mathbf{u}}, \ {x}_j \in \mathbb{Z}, \ \forall j \in \mathcal{I}
\]

\textbf{(2) Constraint Perturbation:} The constraint system is perturbed as
\(
\hat{A} = A + \delta_A, \quad \hat{\mathbf{b}} = \mathbf{b} + \delta_b,
\)
where \(\delta_A \in \mathbb{R}^{q \times n}\) and \(\delta_b \in \mathbb{R}^q\) are small perturbations sampled from Gaussian distributions. The augmented MILP is
\[
\min_{{\mathbf{x}}} \ {\mathbf{c}}^{\mathsf{T}} {\mathbf{x}}
\quad \text{s.t.} \quad {\hat{A}} {\mathbf{x}} \le {\hat{\mathbf{b}}}, \ {\mathbf{l}} \le {\mathbf{x}} \le {\mathbf{u}}, \ {x}_j \in \mathbb{Z}, \ \forall j \in \mathcal{I}
\]
\end{definition}


\vspace{-2mm}
\begin{definition}[Dual Variable Perturbation] 
For the LP relaxation of the MILP, let \(y^* \in \mathbb{R}^m\) denote the optimal dual variables corresponding to constraints \(A x \leq b\). To enhance robustness to solver sensitivity and numerical instability, we perturb the dual variables by injecting Gaussian noise \(\delta_y\)
\[
\hat{y} = y^* + \delta_y,
\]

\end{definition}

\subsection{Dynamic Stratified Contrastive Learning}
In the Branch-and-Bound (B\&B) tree for solving MILPs, node feature distributions vary significantly across depths. For example, \texttt{sol\_is\_at\_lb} (\texttt{sol\_is\_at\_ub})—indicating whether a variable’s LP solution matches its lower (upper) bound—is typically zero for upstream nodes but mostly one for downstream nodes, where many variables are already fixed. These differences arise because nodes at different depths represent different solving phases: upstream nodes capture global optimization potential, whereas downstream nodes are strongly shaped by prior branching decisions. Learning a single shared representation thus dilutes critical upstream signals, reducing both branching accuracy and generalization.

A natural alternative is supervised contrastive learning (SCL)~\cite{khosla2020supervised}, where nodes from the same group are pulled together and all others are treated as negatives. While SCL captures intra-group similarity, it overlooks the gradual semantic variation across depths. In particular, nodes from adjacent groups—though semantically close—are pushed apart as strongly as distant ones. This abrupt separation neglects the progressive, stratified nature of node semantics, degrading representation quality and limiting branching accuracy, especially for the critical yet underrepresented upstream nodes.

To address these issues, we propose a \emph{dynamic stratified contrastive learning framework}, which explicitly leverages semantic and feature variations across the B\&B tree. Unlike conventional SCL, which treats all negatives uniformly, our framework introduces a dynamic stratified weighting mechanism that weights positive and negative node pairs according to their depth distance: nearby nodes are kept moderately close, while distant nodes are strongly separated. This stratified weighting strategy preserves intra-group consistency, encourages gradual separation across adjacent phases, and enforces stronger discrimination for distant phases, improving the node representations' quality for branching decisions.

We initiate the process with node $i$ (an MILP sample) and its bipartite graph $s_i = (\mathcal{G}_i, C_i, E_i, V_i)$.
Then $s_i$ is fed into a Graph Convolutional Neural Network (GCNN)\footnote{We adopt the standard GCNN, which remains widely used due to its strong performance and fast inference speed. Compared with more complex models such as GAT, GCNN enables faster branching decisions, resulting in higher overall efficiency.} for message passing between variable and constraint vertices, obtaining the variable representations $Z_i = \{z_v \mid v \in V_i\}$ for the entire graph.
For a mini-batch containing multiple problem instances ${s_i}$, we collect all node embeddings from their respective graphs to construct a unified representation pool, enabling cross-instance contrastive learning.
Based on these embeddings, we next introduce both \textit{dynamic stratified contrastive loss} that encourages gradual separation across adjacent groups and the supervised loss for learning branching decisions.

Formally, the dynamic stratified contrastive loss is defined as:
\begin{equation}
\small
\mathcal{L}^{(cons)} \!\!=\! \!\sum_{i} \!-\tfrac{1}{|P(i)|}\!\!\sum_{p\in P(i)}\!\!\!\log\!\frac{\exp\left(\operatorname{sim}\left(Z_{i}, Z_{p}\right)/\tau\right)}{sum_A}\!,
\end{equation}
where $sum_A = \sum_{j\in A(i)}\!\exp\!\left(\!w\!\left(g(i), g(j)\right)\!\cdot\!\operatorname{sim}\!\left(Z_{i}, Z_{j}\right)\!/\tau\right)$, $s_i$ denotes the anchor node, $g(i)$ returns the group index of node $s_i$, $P(i)=\{ p\neq i \mid g(p)=g(i) \}$ is the index set of positive nodes in the same group, $A(i)=\{ a\neq i \}$ denotes all other nodes, and $\operatorname{sim}(\cdot,\cdot)$ is the cosine similarity.

The \textit{dynamic stratified weight} is defined to control the separation strength between nodes from different groups, as follows
\begin{equation}
\setlength{\abovedisplayskip}{3pt}
\setlength{\belowdisplayskip}{3pt}
w(g(i),g(j)) = \sigma(\alpha_{|g(i)-g(j)|}),
\end{equation}
where $\sigma(\cdot)$ is the sigmoid function that bounds the weight within $(0,1)$ for training stability, $\alpha_k$ is a non-decreasing function of the group distance, parameterized as
\begin{equation}
\alpha_k = \sum_{j=1}^{k} \beta_j, \quad \beta_j = \text{softplus}(\theta_j) \ge 0, (\alpha_0=1)
\end{equation}
where $\theta_j$ are learnable parameters and $\text{softplus}(x)=\log(1+e^x)$ ensures positivity for monotonicity and enables backpropagation.
This formulation preserves intra-group consistency and enforces moderate separation between adjacent groups, while dynamically weighting pairwise similarities such that larger weights induce stronger separation and smaller weights reduce their influence.

In addition, to enable the model to learn strong branching decisions, we adopt a commonly used supervised loss based on cross-entropy. Specifically, the variable representations $Z_i$ of the entire graph are fed into a multilayer perceptron layer (MLP) to predict a branching score for each candidate variable $v \in V_i$, and the highest-scoring variable $a_i$ is selected during inference.
The network is then optimized using the cross-entropy loss
\begin{equation}
    \mathcal{L}^{(\text{sup})} = -\frac{1}{N} \sum_{(s_i, a_i^*) \in \mathcal{D}} \log \pi_{\theta_{sup}}(a_i^* \mid s_i).
\end{equation}
where $a_i^*$ is the branch decision of strong branching under $s_i$, $\mathcal{D}$ is the training sets, $N = |\mathcal{D}|$ and $\theta_{sup}$ is the parameters of the GCNN.

The overall training objective combines the original supervised loss $\mathcal{L}^{(sup)}$ with the dynamic stratified contrastive loss:

\begin{equation}
\mathcal{L} = \mathcal{L}^{(sup)} + \lambda \mathcal{L}^{(cons)},
\end{equation}
where $\lambda$ is a learnable parameter as a balancing coefficient, set to 0 during inference.
\eat{which tends to average out critical feature signals—we propose Hierarchical Supervised Contrastive Learning}

\eat{These discrepancies stem from the fact that upstream nodes (near the root) reflect global optimization potential before the feasible region contracts, whereas downstream nodes (near the leaves) are shaped by accumulated local constraints and primarily capture adaptation to the decision path.
Such distributional differences imply that naively learning a unified, shared feature representation across all nodes can lead to an ``averaging'' effect—critical signals are smoothed out, particularly diminishing the model’s branching accuracy and generalization ability at upstream nodes.} 


\eat{Without grouping, standard contrastive learning treats all non-positive samples equally, which can average out important features and reduce the effectiveness of representation learning for upstream nodes.}



\eat{Specifically, the separation strength is parameterized as a learnable weight that depends on the group distance, measured by the average depth gap across groups: embeddings of nodes from adjacent groups are thus only moderately separated, while those from distant groups are enforced to be more strongly distinguished.}

\subsection{Branching with \ours}
\ltk{During MILP solving, \ours~can be invoked at every B\&B branching decision. At upstream nodes near the root, where branching largely determines search time and tree size, we adopt a hybrid strategy: \ours~first predicts scores for all candidate variables to identify the top-k, and strong branching selects the highest-scoring one. Downstream nodes, in contrast, directly use \ours~for branching.
}
\vspace{-4mm}

Since the total depth of the B\&B tree is unknown, we use candidate set size to distinguish upstream from downstream nodes, reflecting both solving stage and branching difficulty: shallow nodes have larger, more ambiguous candidate sets, whereas deeper nodes have smaller, more constrained and informative ones.
The threshold $d_0$ is set as the shallowest depth where a node's candidate set is no more than a fraction $\rho$ of all variables, \ie 
\[
\setlength{\abovedisplayskip}{1pt}
\setlength{\belowdisplayskip}{1pt}
d_0 = \min \left\{ d \,\middle|\, n_{\text{cand}}(d) \le \rho \cdot n_{\text{root}} \right\},
\]  
where $n_{\text{cand}}(d)$ is the number of branching candidates at depth $d$, and variable ratio threshold $\rho \in (0,1)$ is a hyperparameter distinguishing upstream and downstream nodes.
\begin{table*}[t]
  \centering
  \setlength{\tabcolsep}{3pt}
  \caption{Policy evaluation in terms of solving time, number of B\&B nodes, and number of wins over number of solved instances on four combinatorial optimization problems. The best-performing methods in terms of Wins and Time are highlighted in bold, while for Nodes, the best neural-based method is highlighted.}
  \vspace{-2mm}
\aboverulesep=0pt
\belowrulesep=0pt
\resizebox{\textwidth}{!}{
    \begin{tabular}{c|ccc|ccc|ccc|ccc|ccc|ccc}
    \Xhline{4\arrayrulewidth}
    &\multicolumn{9}{c|}{\textbf{Set Covering}}                                   & \multicolumn{9}{c}{\textbf{Combinatorial Auction}} \\
    \cline{2-19}
    \multicolumn{1}{c|}{} & \multicolumn{3}{c|}{Easy} & \multicolumn{3}{c|}{Medium} & \multicolumn{3}{c|}{Hard} & \multicolumn{3}{c|}{Easy} & \multicolumn{3}{c|}{Medium} & \multicolumn{3}{c}{Hard} \\
    Model & \hspace*{-1pt}Wins$\uparrow$ & \hspace*{-4pt}\scalebox{0.9}[1]{Time$\downarrow$}   & \hspace*{-2pt}\scalebox{0.85}[1]{Nodes$\downarrow$ } 
    & \hspace*{2pt}Wins$\uparrow$ & \hspace*{-4pt}Time$\downarrow$ & \hspace*{-3pt}\scalebox{0.93}[1]{Nodes$\downarrow$}
    & \hspace*{-1pt}Wins$\uparrow$ & \hspace*{-6pt}Time$\downarrow$   & \hspace*{-4pt}\scalebox{0.88}[1]{Nodes$\downarrow$ } 
    & \hspace*{-1pt}\scalebox{0.9}[1]{Wins$\uparrow$} & \hspace*{-6pt}\scalebox{0.9}[1]{Time$\downarrow$}   & \hspace*{-3pt}\scalebox{0.9}[1]{Nodes$\downarrow$} 
    & \hspace*{2pt}Wins$\uparrow$ & \hspace*{-4pt}Time$\downarrow$   & \hspace*{-4pt}Nodes$\downarrow$ 
    & \hspace*{-1pt}Wins$\uparrow$ & \hspace*{-5pt}Time$\downarrow$   & \hspace*{-3pt}\scalebox{0.88}[1]{Nodes$\downarrow$ } \\
    \midrule
    FSB~\cite{applegate1995finding} & 0/100 & 17.42 & 16    & 0/100 & 409.32 & 164   & 0/100 & 3600  & n/a   & 0/100 & 4.12  & 6     & 0/100 & 87.45 & 72    & 0/100 & 1821.62 & 401 \\
    RPB~\cite{achterberg2005branching} & 0/100 & 8.91  & 55    & 0/100 & 59.73 & 1734  & 0/100 & 1654.84 & 47352 & 0/100 & 2.73  & 10    & 0/100 & 21.87 & 687   & 0/100 & 137.11 & 5472 \\
    \midrule
    Trees~\cite{alvarez2017machine} & 0/100 & 9.31  & 187   & 0/100 & 92.66 & 4203  & 0/100 & 1800.62 & 45126 & 0/100 & 2.51  & 87    & 0/100 & 23.66 & 980   & 0/100 & 458.35 & 10183 \\
    LMART~\cite{burges2010ranknet} & 0/100 & 7.21  & 171   & 0/100 & 59.86 & 1903  & 0/100 & 1252.01 & 34331 & 0/100 & 1.98  & 75    & 11/100 & 17.42 & 876   & 0/100 & 224.02 & 6149 \\
    svmrank~\cite{khalil2016learning} & 0/100 & 8.07  & 163   & 0/100 & 73.06 & 1937  & 4/100 & 1038.14 & 31089 & 0/100 & 2.34  & 78    & 0/100 & 23.16 & 868   & 0/100 & 376.61 & 6816 \\
    GCNN~\cite{gasse2019exact}  & 11/100 & 6.11  & 171   & 4/100 & 42.44 & 1484  & 0/100 & 1299.99 & 37108 & 0/100 & 1.88  & 72    & 0/100 & 19.31 & 655   & 1/100 & 114.32 & 5231 \\
    FILM~\cite{gupta2020hybrid}  & 3/100 & 6.29  & 165   & 0/100 & 44.32 & 1391  & 0/100 & 1392.42 & 33692 & 11/100 & 1.77  & 73    & 0/100 & 26.04 & 857   & 0/100 & 416.53 & 5310 \\
    TreeGate~\cite{zarpellon2021parameterizing} & 0/100 & 8.32  & 231   & 0/100 & 51.4  & 2410  & 0/100 & 2085.85 & 58536 & 0/100 & 2.35  & 83    & 0/100 & 18.32 & 862   & 0/100 & 176.4 & 5437 \\
    T-BranT~\cite{lin2022learning} & 0/100 & 6.91  & 153   & 0/100 & 43.53 & 1653  & 0/100 & 1154.46 & 34694 & 0/100 & 2.28  & 89    & 0/100 & 19.56 & 723   & 0/100 & 142.73 & 6742 \\
    TGAT~\cite{seyfi2023exact}  & 3/100 & 6.8   & 127   & 1/100 & 46.81 & 1336  & 1/100 & 1174.38 & 29452 & 0/100 & 2.01  & 75    & 3/100 & 22.03 & 694   & 0/100 & 126.49 & 5531 \\
    Symb4CO~\cite{kuang2024rethinking} & 19/100 & 6.09 & 151 & 4/100 & 43.37 & 1438 & 0/100 & 1372.47 & 57315 & 29/100 & 1.69 & 75 & 5/100 & 17.34 & 743 & 4/100 & 108.7 & 5637 \\ 
    GS4CO~\cite{kuang2024towards} & 2/100 & 6.21 & 216 & 3/100 & 42.74 & 1735 & 2/100 & 1147.84 & 63142 & 10/100 & 1.73 & 82 & 1/100 & 18.48 & 746 & 3/100 & 104.39 & 6482 \\ 
    \hspace*{-3pt}\scalebox{0.88}[1]{CAMBranch~\cite{lin2024cambranch}} & 2/100 & 6.33  & 139   & 11/100 & 41.53 & \textbf{1279} & 4/100 & 1104.34 & 31584 & 4/100 & 1.77  & 87    & 3/100 & 17.79 & 683   & 0/100 & 125.94 & 4904 \\
    MILP-Evolve~\cite{li2025towards}& 0/100 & 10.31 & 144   & 3/100 & 46.37 & 1431  & 7/100 & 1024.21 & 30812 & 3/100 & 1.78  & \textbf{64} & 12/100 & 17.5  & 663   & 5/100 & 106.73 & 5316 \\
    \midrule
    Ours  & \textbf{60/100} & \textbf{5.99} & \textbf{117} & \textbf{74/100} & \textbf{37.01} & 1452  & \textbf{82/100} & \textbf{953.24} & \textbf{29375} & \textbf{53/100} & \textbf{1.63} & 68    & \textbf{65/100} & \textbf{16.85} & \textbf{645} & \textbf{87/100} & \textbf{99.81} & \textbf{4721} \\
    \Xhline{4\arrayrulewidth}
    &\multicolumn{9}{c|}{\textbf{Capacitated Facility Location}}                  & \multicolumn{9}{c}{\textbf{Maximum Independent Set}} \\
    \cline{2-19}
    \multicolumn{1}{c|}{} & \multicolumn{3}{c|}{Easy} & \multicolumn{3}{c|}{Medium} & \multicolumn{3}{c|}{Hard} & \multicolumn{3}{c|}{Easy} & \multicolumn{3}{c|}{Medium} & \multicolumn{3}{c}{Hard} \\
    Model & \hspace*{-1pt}Wins$\uparrow$ & \hspace*{-4pt}\scalebox{0.9}[1]{Time$\downarrow$}   & \hspace*{-2pt}\scalebox{0.85}[1]{Nodes$\downarrow$ } 
    & \hspace*{2pt}Wins$\uparrow$ & \hspace*{-4pt}Time$\downarrow$ & \hspace*{-3pt}\scalebox{0.93}[1]{Nodes$\downarrow$}
    & \hspace*{-1pt}Wins$\uparrow$ & \hspace*{-6pt}Time$\downarrow$   & \hspace*{-4pt}\scalebox{0.88}[1]{Nodes$\downarrow$ } 
    & \hspace*{-1pt}\scalebox{0.9}[1]{Wins$\uparrow$} & \hspace*{-6pt}\scalebox{0.9}[1]{Time$\downarrow$}   & \hspace*{-3pt}\scalebox{0.9}[1]{Nodes$\downarrow$} 
    & \hspace*{2pt}Wins$\uparrow$ & \hspace*{-4pt}Time$\downarrow$   & \hspace*{-4pt}Nodes$\downarrow$ 
    & \hspace*{-1pt}Wins$\uparrow$ & \hspace*{-5pt}Time$\downarrow$   & \hspace*{-3pt}\scalebox{0.88}[1]{Nodes$\downarrow$ } \\
    \midrule
    FSB~\cite{applegate1995finding} & 0/100 & 30.49 & 14    & 0/100 & 224.13 & 81    & 0/100 & 748.33 & 52    & 0/100 & 23.57 & 7     & 0/100 & 1581.86 & 38    & 0/100 & 3600  & n/a \\
    RPB~\cite{achterberg2005branching} & 0/100 & 26.37 & 23    & 0/100 & 157.73 & 169   & 0/100 & 645.72 & 105   & 0/100 & 11.33 & 21    & 0/100 & 111.41 & 731   & 0/100 & 2124.75 & 7815 \\
    \midrule
    Trees~\cite{alvarez2017machine} & 0/100 & 28.91 & 133   & 0/100 & 159.88 & 404   & 0/100 & 634.12 & 400   & 0/100 & 10.68 & 73    & 0/100 & 1178.31 & 4643  & 0/100 & 3442.23 & 28210 \\
    LMART~\cite{burges2010ranknet} & 0/100 & 23.36 & 114   & 0/100 & 129.1 & 357   & 0/100 & 520.26 & 345   & 14/100 & 8.31  & 52    & 0/100 & 219.22 & 747   & 0/100 & 3356.55 & 33732 \\
    svmrank~\cite{khalil2016learning} & 0/100 & 23.61 & 116   & 0/100 & 130.88 & 351   & 0/100 & 512.98 & 331   & 0/100 & 13.77 & 45    & 0/100 & 241.83 & \textbf{541} & 0/100 & 3174.23 & 20030 \\
    GCNN~\cite{gasse2019exact}  & 0/100 & 22.15 & 107   & 0/100 & 121.31 & 341   & 0/100 & 563.54 & 345   & 0/100 & 11.44 & 43    & 0/100 & 192.86 & 1837  & 0/100 & 1187.5 & 18668 \\
    FILM~\cite{gupta2020hybrid}  & 0/100 & 21.56 & 104   & 1/100 & 116.81 & 337   & 0/100 & 543.14 & 358   & 0/100 & 10.73 & 47    & 0/100 & 164.57 & 1682  & 0/100 & 3528.71 & 16667 \\
    TreeGate~\cite{zarpellon2021parameterizing} & 0/100 & 21.57 & 126   & 0/100 & 126.8 & 456   & 0/100 & 929.82 & 495   & 0/100 & 11.86 & 56    & 0/100 & 131.3 & 1732  & 0/100 & 3338.97 & 16596 \\
    T-BranT~\cite{lin2022learning} & 0/100 & 18.62 & 142   & 0/100 & 135.22 & 1052  & 0/100 & 638.19 & 1220  & 0/100 & 9.31  & 51    & 0/100 & 113.44 & 1521  & 0/100 & 3338.47 & \textbf{7011} \\
    TGAT~\cite{seyfi2023exact}  & 15/100 & 17.92 & \textbf{96} & 4/100 & 113.12 & \textbf{299}   & 3/100 & 372.71 & 336   & 5/100 & 8.45  & 46    & 17/100 & 96.42 & 1457  & 0/100 & 1201.55 & 18442 \\
    Symb4CO~\cite{kuang2024rethinking}& 0/100 & 19.94 & 121 & 14/100 & 109.66 & 313 & 0/100 & 497.39 & 338 & 0/100 & 10.8 & 49 & 0/100 & 134.72 & 794 & 9/100 & 1095.41 & 17837 \\ 
    GS4CO~\cite{kuang2024towards}& 0/100 & 22.72 & 101 & 0/100 & 119.83 & 327 & 0/100 & 452.83 & 345 & 0/100 & 10.53 & 52 & 0/100 & 131.35 & 852 & 1/100 & 1137.28 & 16341 \\
    \hspace*{-3pt}\scalebox{0.88}[1]{CAMBranch~\cite{lin2024cambranch}} & 9/100 & 18.11 & 97    & 2/100 & 114.53 & 317   & 0/100 & 461.83 & 359   & 2/100 & 9.07  & \textbf{41} & 0/100 & 145.09 & 1648  & 3/100 & 1143.61 & 16473 \\
    MILP-Evolve~\cite{li2025towards}& 2/100 & 20.58 & 103   & 0/100 & 117.65 & 327   & 5/100 & 352.5 & 324   & 2/100 & 8.79  & 48    & 0/100 & 135.7 & 1586  & 0/100 & 1163.47 & 17663 \\
    \midrule
    Ours  & \textbf{74/100} & \textbf{17.14} & 111   & \textbf{79/100} & \textbf{104.38} & 309 & \textbf{92/100} & \textbf{331.3} & \textbf{294} & \textbf{77/100} & \textbf{7.89} & 44    & \textbf{83/100} & \textbf{94.1} & 1385  & \textbf{87/100} & \textbf{964.79} & 14726 \\
    \Xhline{4\arrayrulewidth}
    \end{tabular}}%
  \label{T1}%
  \vspace{-3mm}
\end{table*}%
\section{Experiments}
\subsection{Experimental Settings}

\stitle{Datasets.}
We evaluate our method on four NP-hard benchmarks widely used in prior work~\cite{alvarez2017machine,burges2010ranknet,khalil2016learning,gasse2019exact,gupta2020hybrid,zarpellon2021parameterizing,lin2022learning,seyfi2023exact,lin2024cambranch,li2025towards}, including Set Covering~\cite{balas2009set}, Combinatorial Auction~\cite{leyton2000towards}, Capacitated Facility Location~\cite{cornuejols1991comparison}, and Maximum Independent Set~\cite{bergman2016decision}. All instances are collected from SCIP rollouts with Strong Branching and provided at three difficulty levels (Easy, Medium, Hard), with training sets on the easy level using 100K expert samples and testing on all three levels with 100 instances each. 
We conduct all experiments using SCIP 7.0.1\footnote{Some baseline implementations originally used either SCIP 6.0.1 or 7.0.1. For consistency, we standardize all experiments on SCIP 7.0.1, which ensures reproducibility and fair comparison across methods.} as the backend solver with a one-hour time limit.

\stitle{Baselines.}
We compare our algorithm against a range of B\&B branching methods: (1) conventional methods full strong branching (FSB)~\cite{applegate1995finding} and Reliability Pseudocost
Branching (RPB)~\cite{achterberg2005branching}, (2) machine learning based methods Trees~\cite{alvarez2017machine}, LMART~\cite{burges2010ranknet} and svmrank~\cite{khalil2016learning}, (3) neural based methods GCNN~\cite{gasse2019exact}, FILM~\cite{gupta2020hybrid}, TreeGate~\cite{zarpellon2021parameterizing}, T-BranT~\cite{lin2022learning}, TGAT~\cite{seyfi2023exact}, CAMBranch~\cite{lin2024cambranch}, MILP-Evolve \cite{li2025towards}, and (4) RL based methods Symb4CO~\cite{kuang2024rethinking} and GS4CO~\cite{kuang2024towards}.
\eat{ SORREL~\cite{feng2025sorrel}}

\stitle{Metrics}. Following standard MILP benchmarks, we evaluate solving time (time), the number of nodes in the B\&B tree (nodes), and the number of times each method achieves the best solving time (wins), where lower time and nodes are better, and higher wins are preferred, with solving time being most important. To assess branching decisions, we report top-k accuracy (acc@1, acc@3, acc@5, acc@10), indicating the fraction of samples where the variable selected by the strong branching is among the top-k variables.

\stitle{Parameter Setting.}
\ltk{According to the elbow methods, the group number $m$ for stratified node grouping is set to 4 (Set Covering, Combinatorial Auction, Maximum Independent Set) and 5 (Capacitated Facility Location).
Upstream augmentation applies two equivalent MILP derivations with 35\% probability and three perturbed derivations with 10\% to maintain stability and reliability of upstream samples.
Based on the experimental results in {Exp-4}, the temperature $\tau$ for dynamic stratified contrastive learning is set to 0.08, the hybrid branching ratio $\rho$ is set to 0.8 and $k=5$ since \ours~achieves over 90\% acc@5 (Table~\ref{T2}). 
The training set follows the standard configuration in prior works~\cite{khosla2020supervised,gasse2019exact,gupta2020hybrid, zarpellon2021parameterizing,lin2024cambranch}, and all baselines use their released code with default parameters.}

\begin{table*}[t!]
  \centering
  \setlength{\tabcolsep}{1.5pt}
  \caption{Imitation learning accuracy on the test sets (\%).}
  \vspace{-3mm}
  \aboverulesep=0pt
\belowrulesep=0pt
\resizebox{\textwidth}{!}{
    \begin{tabular}{c|cccc|cccc|cccc|cccc}
    \Xhline{4\arrayrulewidth}
    \multicolumn{1}{c|}{} & \multicolumn{4}{c|}{\textbf{Set Covering}} & \multicolumn{4}{c|}{\textbf{Combinatorial Auction}} & \multicolumn{4}{c|}{\textbf{Capacitated Facility Location}} & \multicolumn{4}{c}{\textbf{Maximum Independent Set}} \\
    \midrule
    Model & acc@1$\uparrow$ & acc@3$\uparrow$ & acc@5$\uparrow$ & acc@10$\uparrow$ & acc@1$\uparrow$ & acc@3$\uparrow$ & acc@5$\uparrow$ & acc@10$\uparrow$ & acc@1$\uparrow$ & acc@3$\uparrow$ & acc@5$\uparrow$ & acc@10$\uparrow$ & acc@1$\uparrow$ & acc@3$\uparrow$ & acc@5$\uparrow$ & acc@10$\uparrow$ \\
    \midrule
    Trees~\cite{alvarez2017machine} & 54.7 & 74.9 & 83.7 & 93.3 & 47.7 & 69.6 & 80.1 & 91.5 & 63.4 & 90.0 & 96.7 & 99.9 & 40.6 & 53.5 & 59.0 & 65.8 \\
    LMART~\cite{burges2010ranknet} & 60.1 & 78.4 & 86.3 & 94.8 & 48.8 & 69.1 & 79.3 & 90.3 & 68.3 & 92.4 & 97.2 & 99.9 & 55.1 & 68.3 & 73.2 & 78.9 \\
    svmrank~\cite{khalil2016learning} & 59.9 & 79.1 & 86.3 & 95.0 & 58.0 & 77.6 & 86.2 & 94.0 & 68.2 & 92.0 & 97.5 & \textbf{100.0} & 55.5 & 69.3 & 74.8 & 81.6 \\
    GCNN~\cite{gasse2019exact}  & 61.8 & 80.9 & 88.9 & 96.3 & 64.1 & 83.4 & 90.8 & 96.8 & 70.4 & 92.9 & 97.9 & \textbf{100.0} & 64.0 & 76.7 & 82.3 & 90.3 \\
    FILM~\cite{gupta2020hybrid}  & 44.1 & 64.8 & 76.0 & 90.2 & 43.6 & 76.6 & 84.2 & 94.7 & 67.5 & 90.6 & 96.6 & 99.9 & 62.3 & 75.1 & 79.1 & 89.7 \\
    TreeGate~\cite{zarpellon2021parameterizing} & 64.1 & 73.6 & 84.3 & 94.6 & 61.4 & 81.5 & 85.9 & 95.2 & 68.5 & 91.5 & 97.0 & 99.9 & 62.8 & 75.8 & 80.6 & 89.8 \\
    TGAT~\cite{seyfi2023exact}  & 68.5 & 79.8 & 89.2 & 97.7 & \textbf{71.4} & 84.4 & 90.5 & 96.9 & 69.5 & 92.4 & 97.4 & \textbf{100.0} & 62.1 & 76.4 & 81.4 & 90.5 \\
    Symb4CO~\cite{kuang2024rethinking} & 57.3 & 73.2 & 85.6 & 94.8 & 60.2 & 79.9 & 83.4 & 93.9 & 66.7 & 89.7 & 96.3 & 99.1 & 58.6 & 74.9 & 79 & 87.4 \\ 
    GS4CO~\cite{kuang2024towards} & 56.8 & 72.7 & 87.2 & 94.5 & 61.5 & 81.4 & 85.7 & 94.3 & 67.1 & 90.9 & 95.7 & 99.4 & 59.5 & 75.6 & 80.3 & 89.2 \\ 
    CAMBranch~\cite{lin2024cambranch} & 60.7 & 78.6 & 87.4 & 96.2 & 63.5 & 82.7 & 90.5 & 96.8 & 68.9 & 91.2 & 96.8 & 99.9 & 63.2 & 76.4 & 81.7 & 90.6 \\
    MILP-Evolve~\cite{li2025towards}& 61.3 & 74.2 & 88.6 & 97.1 & 62.3 & 83.1 & 88.9 & 96.4 & 61.8 & 88.8 & 97.5 & 99.9 & 55.6 & 73.1 & 78.2 & 84.7 \\
    \hline
    Ours  & \textbf{68.9} & \textbf{85.3} & \textbf{92.3} & \textbf{98.2} & 66.4 & \textbf{85.2} & \textbf{91.6} & \textbf{97.4} & \textbf{71.3} & \textbf{93.8} & \textbf{97.6} & \textbf{100.0} & \textbf{65.3} & \textbf{78.6} & \textbf{84.1} & \textbf{91.2} \\
    \Xhline{4\arrayrulewidth}
    \end{tabular}}%
  \label{T2}%
  \vspace{-4mm}
\end{table*}%

\begin{table*}[htbp]
  \centering
  \setlength{\tabcolsep}{1.5pt}
  \caption{Ablation study on the set covering problem.}
  \vspace{-3mm}
  \aboverulesep=0pt
\belowrulesep=0pt
\resizebox{\textwidth}{!}{
    \begin{tabular}{c|cccccc|cc|cc|cc}
    \Xhline{4\arrayrulewidth}
    \multicolumn{1}{c|}{} & \multicolumn{6}{c|}{Accuracies (\%)}               & \multicolumn{2}{c|}{Easy} & \multicolumn{2}{c|}{Medium} & \multicolumn{2}{c}{Hard} \\
\midrule
\multicolumn{1}{c|}{Model} & acc@1$\uparrow$ & acc@1 (top 20\%)$\uparrow$ & acc@3$\uparrow$ & acc@3 (top 20\%)$\uparrow$ & acc@5$\uparrow$ & acc@5 (top 20\%)$\uparrow$ & Time$\downarrow$  & Nodes$\downarrow$ & Time$\downarrow$  & Nodes$\downarrow$ & Time$\downarrow$  & Nodes$\downarrow$ \\
\midrule
    w/o UAMD & 64.2 & 46.7 & 82.6 & 71.9 & 91.1 & 81.0 & 6.04  & 139   & 41.16 & 1479  & 1096.43 & 34461 \\
    w/o EquMD & 64.8 & 47.2 & 83.4 & 72.2 & 91.2 & 81.3 & 6.04 & 136 & 40.85 & 1474 & 1073.76 & 33897\\
    w/o PerMD & 68.1 & 51.1 & 85 & 74.9 & 92.2 & 82.3 & 6.00 & 119 & 37.14 & 1457 & 975.31 & 29841 \\
    \hline
    w/o DSCons & 65.4 & 43.3 & 83.2 & 68.8 & 90.8 & 80.2 & 6.03  & 135   & 41.33 & 1473  & 1137.06 & 33580 \\
    w/o DSWeights & 66.4 & 45.7 & 84.4 & 72.1 & 91 & 81.3 & 6.01 & 126 & 39.82 & 1467 & 1085.46 & 31478 \\ 
    \hline
    w/o HBS & \textbf{68.9} & \textbf{51.3} &\textbf{ \textbf{85.3}} & \textbf{75.3} & \textbf{92.3} & \textbf{82.5} & \textbf{5.95}  & \textbf{107}   & 37.85 & 1467  & 1011.35 & 30258 \\
    \hline
    Ours  & \textbf{68.9} & \textbf{51.3} & \textbf{85.3} & \textbf{75.3} & \textbf{92.3} & \textbf{82.5} & 5.99 & 117 & \textbf{37.01} & \textbf{1452}  & \textbf{953.24} & \textbf{29375} \\
    \Xhline{4\arrayrulewidth}
    \end{tabular}}%
    \vspace{-2ex}
  \label{T3}%
\end{table*}%

\begin{figure*}[htb!]
	\centering
    \subcaptionbox{$m$\label{P2}}[.32\textwidth]{
    \captionsetup{skip=1pt}
    		\includegraphics[width=.30\textwidth]{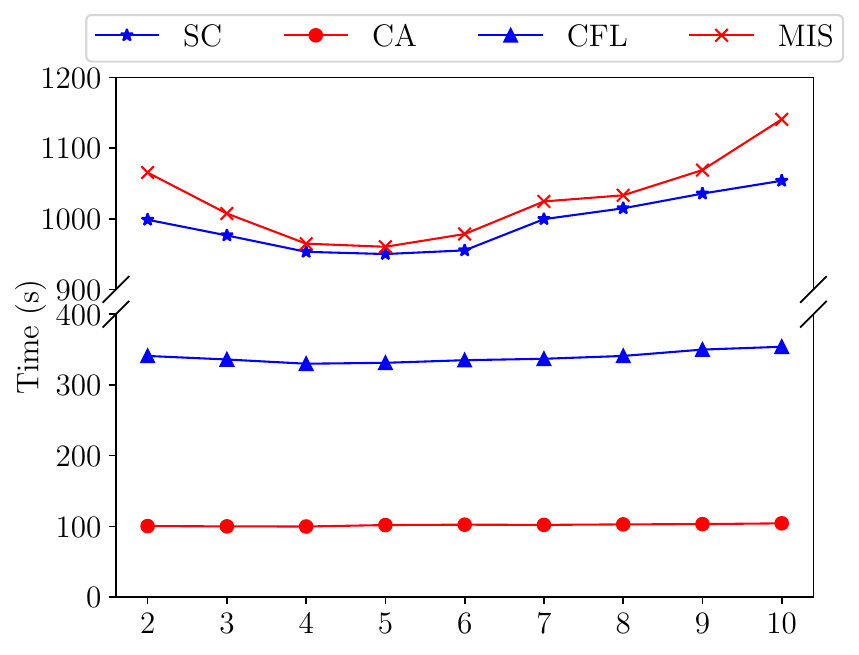}
    	}
	\subcaptionbox{$\tau$\label{P3}}[.30\textwidth]{
    \captionsetup{skip=1pt}
		\includegraphics[width=.30\textwidth]{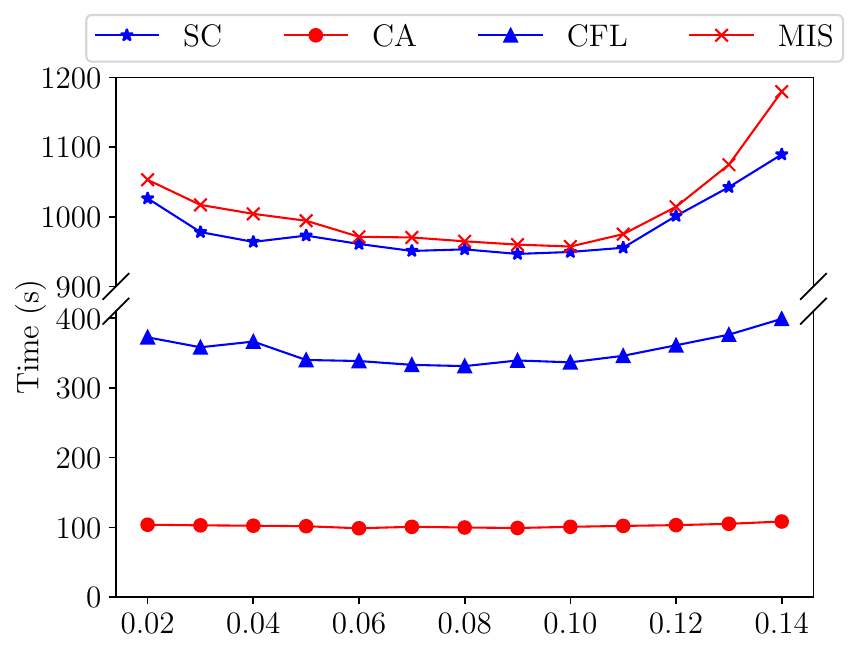}
	}
		\subcaptionbox{$\rho$\label{P4}}[.32\textwidth]{
        \captionsetup{skip=1pt}
		\includegraphics[width=.30\textwidth]{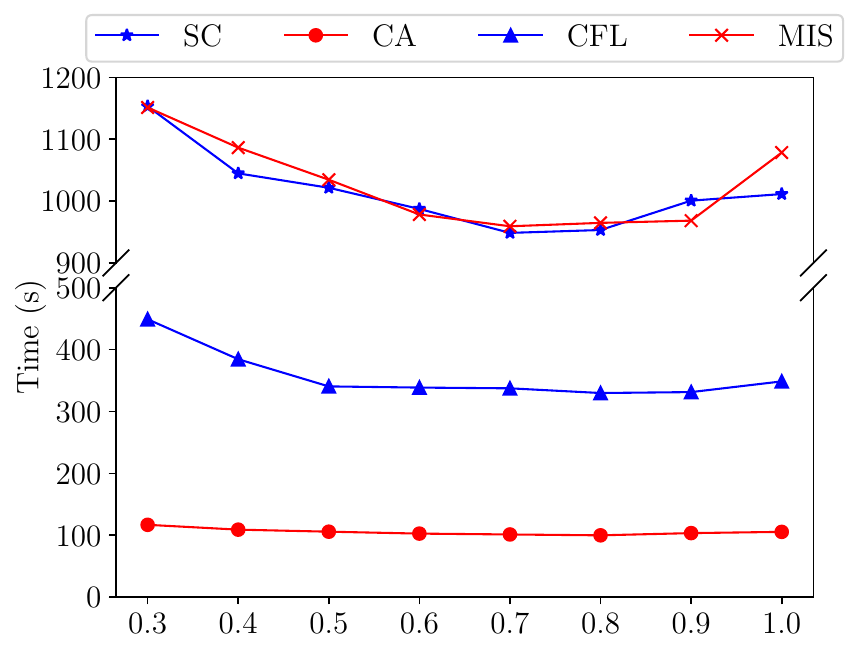}
	}
		\vspace{-3mm}
	\caption{Parameter analysis. In each subplot, SC indicates Set Covering, CA indicates Combinatorial Auction, CFL indicates Capacitated Facility Location, and MIS indicates Maximum Independent Set.}
	\label{P1}
	\vspace{-5mm}
\end{figure*}
\vspace{-3mm}
\subsection{Experimental Results}
\textbf{Exp-1: Overall Comparison with Existing Methods.}
We first compare our method with baseline methods on four different datasets. The results are presented in Table \ref{T1}.
Our \ours~ consistently outperforms all baseline methods in terms of solving time and number of wins, while also generating fewer B\&B nodes in most cases.

Solving time is the most direct indicator of the performance of a branching policy. \ours~achieves the shortest solving time across all datasets, improving efficiency by 12.36\% on average over the current SOTA MILP-Evolve and by 26.48\% over GCNN. On easy, medium, and hard instances, the gains over MILP-Evolve reach 21.25\%, 20.45\%, and 11.25\%, respectively. Notably, almost all neural-based methods surpass the traditional FSB and RPB, highlighting the effectiveness of neural-based branching strategies.

The B\&B tree size directly reflects space usage and generally correlates with solving efficiency.
On average, \ours~ explores 3.28\% fewer nodes than T-BranT, the best-performing baseline in terms of nodes, and 26.48\% fewer than MILP-Evolve, reflecting a substantial improvement in search efficiency and pruning effectiveness.
While occasionally generating more nodes, it still achieves the least solving time, as efficiency depends on both node count and per-node computation. T-BranT’s attention modules slow per-node inference, while our method adopts a hybrid branching strategy, which refines branching decisions with minimal overhead.
Additionally, traditional methods like FSB and RPB may produce fewer nodes but remain less efficient due to costly branching decisions.

The number of wins measures how often a method achieves the shortest solving time among all competitors. Our method achieves the best result in 77.08\% of instances, significantly higher than all baselines. Breaking this down by difficulty, our approach wins on 66.00\% of easy, 75.25\% of medium, and 87.00\% of hard instances. The performance gap becomes larger as the problem difficulty increases, suggesting that our method generalizes better to large and challenging instances beyond the training distribution.

Overall, \ours~ reduces solving time and tree size while achieving the most wins, validating both efficiency and generalization.


\stitle{Exp-2: Performance on Branching Accuracy.}
We evaluate top-$k$ ($k=1,3,5,10$) branching accuracy against neural baselines based on strong branching, excluding FSB and RPB (non-learning) as well as T-BranT (not trained on strong branching), with results in Table~\ref{T3}.
Our method achieves the best performance across all four problems, with average improvements over the current best-performing method TGAT of 0.4\%, 5.1\%, 4.0\%, and 0.7\% for acc@1, acc@3, acc@5, and acc@10, respectively. \eat{While the acc@1 gain is modest, larger gains at acc@3 and acc@5 highlight stronger ranking ability.}
Notably, we aim to balance branching speed and accuracy, which motivates our use of GCNN rather than complex models like TGAT, complemented by optimizing training to improve accuracy. High acc@5 and acc@10 further indicate that top-ranked candidates reliably contain strong branching choices, motivating our hybrid branching strategy.

\stitle{Exp-3: Ablation Study.}
We conduct an ablation study on the set covering problem to evaluate the contributions of Upstream-Augmented MILP Derivation (UAMD) and Dynamic Stratified Contrastive Learning (DSCons). We report acc@1, acc@3, and acc@5 for the top 20\% shallow nodes (acc@1–5 top 20\%) to assess upstream prediction quality.
The results are shown in Table \ref{T3}, where ``w/o UAMD'' and ``w/o DSCons'' denote \ours~without UAMD and DSCons, respectively, ``w/o EquMD'' and `w/o PerMD'' denote \ours~without equivalent and perturbed MILP derivation, respectively,  ``w/o DSWeights'' denotes contrastive learning without dynamic stratified weights and ``w/o HBS'' denotes branching without hybrid branching strategy.

Compared with the ablation variants, the full model achieves the highest accuracy, fastest solving time, and smallest B\&B tree size in most cases, demonstrating the effectiveness and complementarity of each component. Removing UAMD increases solving time by 14.7\% and decreases acc@1 by 4.7\%, with the largest drop caused by omitting equivalent MILP derivation (w/o EquMD) and minimal effect from Perturbed MILP Derivation (w/o PerMD), highlighting the critical role of equivalent MILP derivation in upstream augmentation. Removing DSCons raises solving time by 18.9\% and reduces acc@1 by 3.5\%, while retaining contrastive learning without dynamic stratified weights (w/o DSWeights) partially recovers performance, emphasizing the importance of dynamic stratified weighting. Removing HBS does not affect the intrinsic branching accuracy, as reflected by identical acc@k metrics. On easy instances, removing HBS slightly reduces solving time because acc@1 is already high and the extra computation from top-k strong branching outweighs its benefit. On medium and hard instances, HBS helps reduce solving time and explored nodes, although the gains are modest compared with our two main components.



Notably, accuracy drops for upstream nodes are more pronounced: ``w/o ACons'' and ``w/o UAMD'' cause additional 3.1\% and 0.3\% decreases compared with all nodes, confirming that our method particularly enhances predictions for the most critical upstream portion of the B\&B tree, where accurate branching decisions have the greatest impact on overall solving efficiency.

\stitle{Exp-4: Parameter Analysis.}  
We evaluated the impact of three key hyperparameters: group number $m$, contrastive learning temperature $\tau$, and variable ratio threshold $\rho$. We vary $m$ from 2 to 10, $\tau$ from 0.02 to 0.14, and $\rho$ from 0.3 to 1, while keeping other settings consistent with \textbf{Exp-1}. The range of $\tau$ follows the common setting~\cite{khosla2020supervised}. The results are shown in Fig. \ref{P1}.  
We can find that:  
1) As shown in Fig.~\ref{P2}, model performance first improves with increasing $m$ but then declines as $m$ continues to rise. 
This indicates an optimal trade-off between distinguishing diverse node features with more groups and avoiding over-partitioning, which may disrupt shared patterns among adjacent strata.
Selecting $m$ near this optimal range (4–6) is likely to maximize solving efficiency, in agreement with the results obtained using the elbow methods;
2) The solving efficiency with varying $\tau$ exhibits a similar trend as that of $m$ (Fig. \ref{P3}). Low values of $\tau$ make the model overly sensitive to hard samples, leading to unstable training, while high values weaken the contrastive separability. An intermediate $\tau$ (0.08 to 0.1) thus provides a balance between training stability and feature discriminability, yielding the best overall performance;  
3) The effect of $\rho$ follows a comparable pattern (Fig.~\ref{P4}). 
As $\rho$ increases, more nodes are processed using the hybrid branching strategy, which refines branching decisions but also increases the computational cost at these nodes.
Initially, the gain in accuracy outweighs the added computational cost; however, beyond a certain threshold (0.8 and 0.9), further increases in $\rho$ yield diminishing returns in accuracy relative to the cost. 
This indicates the existence of an optimal trade-off point for $\rho$ that maximizes solving efficiency.
\section{Conclusion}

In this paper, we propose \ours, a Dynamic Stratified Contrastive Training framework for MILP branching.
By grouping nodes according to feature distributions and generating both equivalent and perturbed upstream MILPs, \ours~ trains a discriminative model that encourages moderate separation between adjacent phases and stronger discrimination for distant phases through dynamic stratified contrastive learning. This effectively models fine-grained semantic differences across the B\&B tree, enhances branching accuracy and solving efficiency. 
Experimental results demonstrate that \ours~achieves SOTA branching performance and reduces MILP solving time by an average of 12.36\%. 
In the future, we plan to develop a neural model that generalizes across diverse MILP types, allowing it to handle a wider range of problem instances. 
Beyond the SCIP solver, we also aim to extend \ours~to multiple MILP solvers, further broadening its practical applicability.

\bibliography{paper}
\bibliographystyle{icml2026}

\newpage
\appendix
\onecolumn
\appendix
\section{Branch-and-Bound Framework}
The \emph{Branch-and-Bound} (B\&B) algorithm is the core framework of modern MILP solvers. It maintains a search tree where: (1)
\begin{itemize}
    \item Each node represents a subproblem obtained from~\eqref{eq:milp} by adding branching constraints.
    \item The LP relaxation of the subproblem provides a lower bound on its objective value.
    \item Nodes are \emph{branched} if their LP solution is fractional, \emph{pruned} if their bound is worse than the incumbent, or used to update the incumbent if feasible and better.
\end{itemize}
B\&B is exact: once all nodes are processed or pruned, the best incumbent is the global optimum.

\section{Strong Branching}
A key component of B\&B is the \emph{branching strategy}, i.e., how to choose the variable to branch on.  
\emph{Strong Branching} (SB) is one of the most effective rules: for each fractional candidate variable $x_j$ with LP value $x_j^*$, SB tentatively branches by adding:
\begin{equation}
x_j \le \lfloor x_j^* \rfloor \quad\text{and}\quad x_j \ge \lceil x_j^* \rceil,
\end{equation}
and solves the LP relaxation for each branch.  
Let $z^*$ denote the LP bound of the current node, and $z_j^{\downarrow}$ and $z_j^{\uparrow}$ denote the LP bounds obtained from the down and up branches, respectively.  
The \emph{bound improvements} are defined as:
\begin{equation}
\Delta_j^{\downarrow} = z_j^{\downarrow} - z^*, 
\quad 
\Delta_j^{\uparrow} = z_j^{\uparrow} - z^*,
\label{eq:bound-improvement}
\end{equation}
where $\Delta_j^{\downarrow} > 0$ (or $\Delta_j^{\uparrow} > 0$) indicates that the corresponding branch yields a tighter bound, thereby potentially enabling more pruning in subsequent search.  
The SB score for variable $x_j$ is computed as:
\begin{equation}
s_j = \Delta^{\downarrow}_j + \Delta^{\uparrow}_j,
\end{equation}
and the variable with the highest $s_j$ is selected.  
While SB often yields smaller search trees, it requires solving $2n$ additional LPs for $n$ candidates, making it computationally expensive.

\section{Proofs}\label{proofs}


\begin{proof}[Proof of Theorem~\ref{theorem1}]
(1) \emph{Feasible-Domain Correspondence.}  
Since \(T\) is invertible, \(\phi(\mathbf{x}) = T\mathbf{x} + \mathbf{t}\) is a bijection on \(\mathbb{R}^n\) with inverse \(\phi^{-1}(\hat{\mathbf{x}}) = T^{-1}(\hat{\mathbf{x}} - \mathbf{t})\).  
If \(\mathbf{x} \in \mathcal{P}\), then:
\[
A\mathbf{x} \le \mathbf{b} 
\quad\Longleftrightarrow\quad
A T^{-1} (\hat{\mathbf{x}} - \mathbf{t}) \le \mathbf{b}
\quad\Longleftrightarrow\quad
\hat{A}\,\hat{\mathbf{x}} \le \hat{\mathbf{b}},
\]  
The bound constraints \(\mathbf{l} \le \mathbf{x} \le \mathbf{u}\) transform similarly into \(\hat{\mathbf{l}} \le \hat{\mathbf{x}} \le \hat{\mathbf{u}}\).  
Thus \(\mathbf{x} \in \mathcal{P} \iff \hat{\mathbf{x}} = \phi(\mathbf{x}) \in \hat{\mathcal{P}}\), proving the bijection between feasible domains.

(2) \emph{Optimal-Solution Correspondence.}  
Let the original LP objective be \(f(\mathbf{x}) = \mathbf{c}^\mathsf{T} \mathbf{x}\) and the transformed LP objective be
\[
\hat{f}(\hat{\mathbf{x}}) = \hat{\mathbf{c}}^\mathsf{T} \hat{\mathbf{x}} + \gamma,
\quad\text{where}\quad
\hat{\mathbf{c}} = (T^{-1})^\mathsf{T} \mathbf{c},\quad
\gamma = -\mathbf{c}^\mathsf{T}T^{-1} \mathbf{t}.
\]
Then for any \(\mathbf{x} \in \mathcal{P}\) and \(\hat{\mathbf{x}} = \phi(\mathbf{x})\),
\[
\hat{f}(\hat{\mathbf{x}}) = \mathbf{c}^\mathsf{T} \mathbf{x}.
\]
Suppose \(\mathbf{x}^*\) is optimal for the original LP, but \(\hat{\mathbf{x}}^* = \phi(\mathbf{x}^*)\) is not optimal for the transformed LP.  
Then there exists \(\hat{\mathbf{x}} \in \hat{\mathcal{P}}\) such that \(\hat{f}(\hat{\mathbf{x}}) < \hat{f}(\hat{\mathbf{x}}^*)\).  
Let \(\mathbf{x} = \phi^{-1}(\hat{\mathbf{x}}) \in \mathcal{P}\).  
By the objective equality, \(\mathbf{c}^\mathsf{T} \mathbf{x} < \mathbf{c}^\mathsf{T} \mathbf{x}^*\), contradicting the optimality of \(\mathbf{x}^*\).  
The converse direction follows symmetrically.  
Therefore, the optimal solution sets correspond bijectively under \(\phi\).
\end{proof}


\begin{proof}[Proof of Theorem~\ref{theorem2}]
Let \(f(\mathbf{x}) = \mathbf{c}^\mathsf{T} \mathbf{x}\) be the original objective, and \(\hat{f}(\hat{\mathbf{x}}) = \hat{\mathbf{c}}^\mathsf{T} \hat{\mathbf{x}} + \gamma\) be the transformed objective with \(\hat{\mathbf{c}} = (T^{-1})^\mathsf{T} \mathbf{c}\) and \(\gamma = -\mathbf{c}^\mathsf{T} T^{-1}\mathbf{t}\).

Let $\mathbf{x}^*$ be the optimal solution of the original LP, and let 
$\hat{\mathbf{x}}^* = T \mathbf{x}^* + \mathbf{t}$ be the corresponding 
optimal solution of the transformed LP. Then the optimal objective values are equal:
\[
f(\mathbf{x}^*) = \hat{f}(\hat{\mathbf{x}}^*).
\]

For strong branching on variable \(x_j\), consider the two subproblems formed by adding branching constraints:
\[
\mathcal{P}_{j}^{\downarrow} := \{ \mathbf{x} \in \mathcal{P} \mid x_j \le \lfloor x_j^* \rfloor \}, \quad
\mathcal{P}_{j}^{\uparrow} := \{ \mathbf{x} \in \mathcal{P} \mid x_j \ge \lceil x_j^* \rceil \}.
\]

Under the variable transformation, these constraints correspond to
\[
\hat{x}_{\pi(j)} \le \lfloor \hat{x}_{\pi(j)}^* \rfloor, \quad \hat{x}_{\pi(j)} \ge \lceil \hat{x}_{\pi(j)}^* \rceil,
\]
where \(\pi(j)\) is the index of the variable in \(\hat{\mathbf{x}}\) corresponding to \(x_j\).

Thus, the transformed subproblems are:
\[
\hat{\mathcal{P}}_{\pi(j)}^{\downarrow} := \{ \hat{\mathbf{x}} \in \hat{\mathcal{P}} \mid \hat{x}_{\pi(j)} \le \lfloor \hat{x}_{\pi(j)}^* \rfloor \}, \]
\[\hat{\mathcal{P}}_{\pi(j)}^{\uparrow} := \{ \hat{\mathbf{x}} \in \hat{\mathcal{P}} \mid \hat{x}_{\pi(j)} \ge \lceil \hat{x}_{\pi(j)}^* \rceil \}.
\]

Since the original and transformed feasible regions are bijectively related by
\[
\hat{\mathbf{x}} = T \mathbf{x} + \mathbf{t}, \quad \mathbf{x} = T^{-1}(\hat{\mathbf{x}} - \mathbf{t}),
\]
we have a one-to-one correspondence between \(\mathcal{P}_{j}^\downarrow\) and \(\hat{\mathcal{P}}_{\pi(j)}^\downarrow\), and between \(\mathcal{P}_{j}^\uparrow\) and \(\hat{\mathcal{P}}_{\pi(j)}^\uparrow\).

Moreover, the objectives satisfy
\[
\hat{f}(\hat{\mathbf{x}}) = f(\mathbf{x}), \quad \forall \hat{\mathbf{x}} \in \hat{\mathcal{P}}, \mathbf{x} = T^{-1}(\hat{\mathbf{x}} - \mathbf{t}).
\]

Let the optimal LP values for the subproblems be
\[
z_j^{\downarrow} := \min_{\mathbf{x} \in \mathcal{P}_j^\downarrow} f(\mathbf{x}), \quad
\hat{z}_{\pi(j)}^{\downarrow} := \min_{\hat{\mathbf{x}} \in \hat{\mathcal{P}}_{\pi(j)}^\downarrow} \hat{f}(\hat{\mathbf{x}}).
\]

Since the feasible regions and objectives correspond exactly,
\[
z_j^{\downarrow} = \hat{z}_{\pi(j)}^{\downarrow}.
\]

Similarly,
\[
z_j^{\uparrow} = \hat{z}_{\pi(j)}^{\uparrow}.
\]

Define the strong branching scores (bound improvements) by
\[
\Delta_j^{\downarrow} = z_j^{\downarrow} - z^*, \quad \Delta_j^{\uparrow} = z_j^{\uparrow} - z^*,
\]
and
\[
\hat{\Delta}_{\pi(j)}^{\downarrow} = \hat{z}_{\pi(j)}^{\downarrow} - \hat{z}^*, \quad \hat{\Delta}_{\pi(j)}^{\uparrow} = \hat{z}_{\pi(j)}^{\uparrow} - \hat{z}^*,
\]
where \(z^* = f(\mathbf{x}^*)\) and \(\hat{z}^* = \hat{f}(\hat{\mathbf{x}}^*)\).

Because \(z^* = \hat{z}^*\) and \(z_j^{\downarrow} = \hat{z}_{\pi(j)}^{\downarrow}\), \(z_j^{\uparrow} = \hat{z}_{\pi(j)}^{\uparrow}\), it follows that
\[
\Delta_j^{\downarrow} = \hat{\Delta}_{\pi(j)}^{\downarrow}, \quad \Delta_j^{\uparrow} = \hat{\Delta}_{\pi(j)}^{\uparrow}.
\]

Since strong branching selects the branching variable and direction based on these scores, the decision for \(x_j\) in the original problem corresponds exactly to the decision for \(\hat{x}_{\pi(j)}\) in the transformed problem.

\end{proof}


\begin{proof}[Proof of Theorem~\ref{theorem3}]
\textbf{(1) Bijection of feasible regions.}
Since \(B\) and \(D\) are invertible, \(T\) is invertible with
\[
\setlength\abovedisplayskip{2pt}
\setlength\belowdisplayskip{2pt}
T^{-1} =
\begin{pmatrix}
B^{-1} & 0 \\[0.3em]
- D^{-1} F B^{-1} & D^{-1}
\end{pmatrix}.
\]
The affine transformation \(\phi(\mathbf{x}) = T\mathbf{x}+\mathbf{t}\), therefore, has the inverse
\[
\setlength\abovedisplayskip{2pt}
\setlength\belowdisplayskip{2pt}
\phi^{-1}(\hat{\mathbf{x}}) = T^{-1}(\hat{\mathbf{x}} - \mathbf{t}),
\]
so \(\phi\) is bijective on \(\mathbb{R}^n\).
By construction, the transformed MILP is obtained from the original one by substituting \(\mathbf{x} = T^{-1}(\hat{\mathbf{x}} - \mathbf{t})\) into all constraints.  
Hence \(\mathbf{x}\) is feasible for the original MILP if and only if \(\hat{\mathbf{x}} = \phi(\mathbf{x})\) is feasible for the transformed MILP.

\textbf{(2) Preservation of integrality.}
For the integer block, we have
\[
\setlength\abovedisplayskip{2pt}
\setlength\belowdisplayskip{2pt}
\hat{\mathbf{x}}_{\mathcal{I}} = B\mathbf{x}_{\mathcal{I}} + \mathbf{t}_{\mathcal{I}}.
\]
Because \(B\) is a signed permutation matrix, \(B\mathbf{z} \in \mathbb{Z}^k\) for all \(\mathbf{z} \in \mathbb{Z}^k\).  
With \(\mathbf{t}_{\mathcal{I}}\in\mathbb{Z}^k\), it follows that \(\mathbf{x}_{\mathcal{I}} \in \mathbb{Z}^k \implies \hat{\mathbf{x}}_{\mathcal{I}} \in \mathbb{Z}^k\).  
Conversely, if \(\hat{\mathbf{x}}_{\mathcal{I}}\in\mathbb{Z}^k\) then
\[
\setlength\abovedisplayskip{2pt}
\setlength\belowdisplayskip{2pt}
\mathbf{x}_{\mathcal{I}} = B^{-1}(\hat{\mathbf{x}}_{\mathcal{I}} - \mathbf{t}_{\mathcal{I}}) \in \mathbb{Z}^k
\]
because \(B^{-1}\) is also a signed permutation matrix.

\textbf{(3) Correspondence of optimal solutions.}
Let the original MILP objective be \(f(\mathbf{x}) = \mathbf{c}^\mathsf{T} \mathbf{x}\).  
Define the transformed objective as
\[
\setlength\abovedisplayskip{2pt}
\setlength\belowdisplayskip{2pt}
\hat{f}(\hat{\mathbf{x}}) = \hat{\mathbf{c}}^\mathsf{T} \hat{\mathbf{x}} + \gamma,
\quad
\hat{\mathbf{c}} = (T^{-1})^\mathsf{T} \mathbf{c},
\quad
\gamma = -\mathbf{c}^\mathsf{T} T^{-1} \mathbf{t}.
\]
Then for all \(\mathbf{x}\) and \(\hat{\mathbf{x}} = \phi(\mathbf{x})\) we have \(\hat{f}(\hat{\mathbf{x}}) = f(\mathbf{x})\).  
If \(\mathbf{x}^*\) is optimal for the original MILP, \(\hat{\mathbf{x}}^* = \phi(\mathbf{x}^*)\) is feasible for the transformed MILP with the same objective value.  
If \(\hat{\mathbf{x}}^*\) were not optimal, there would exist \(\hat{\mathbf{y}}\) feasible with \(\hat{f}(\hat{\mathbf{y}}) < \hat{f}(\hat{\mathbf{x}}^*)\), implying that \(\mathbf{y} = \phi^{-1}(\hat{\mathbf{y}})\) is feasible for the original MILP with \(f(\mathbf{y}) < f(\mathbf{x}^*)\), contradicting optimality.  
The reverse direction is identical.  
Thus, optimal solutions correspond one-to-one.
\end{proof}

\section{Details of Upstream-Augmentation}\label{detalsofaug}
\subsection{Bipartite Graph Vertex Features}\label{sec:feature}
Following \citet{gasse2019exact}, we model the MILP corresponding to each node in the Branch-and-Bound (B\&B) tree with
a bipartite graph, denoted by $(\mathcal{G}, C, E, V)$, where the details of these features are shown in Table~\ref{tab:milp_features}.

\begin{table*}[ht]
\centering
\caption{An overview of the features for constraints, edges, and variables in the bipartite graph
$s_i = (\mathcal{G}, C, E, V)$ following \citet{gasse2019exact}. C = constraint vertex, E = edge, V = variable vertex.}
\vspace{-3mm}
\label{tab:milp_features}

\resizebox{0.95\textwidth}{!}{
\begin{tabular}{ccc}
\hline
\textbf{Type} & \textbf{Feature} & \textbf{Description} \\
\hline
\multirow{5}{*}{C} 
    & obj\_cos\_sim & Cosine similarity between constraint and objective coefficients. \\
    & bias          & Normalized right-hand side (RHS) value. \\
    & is\_tight     & Indicator of whether constraint is tight in LP solution. \\
    & dualsol\_val   & Normalized dual value of the constraint. \\
    & age           & LP age since last improvement on the current vertex. \\
\hline
\multirow{1}{*}{E}
    & coef          & Normalized coefficient $a_{ij}$ linking variable $x_j$ to constraint $c_i$. \\
\hline
\multirow{9}{*}{V}
    & type          & One-hot encoding for variable type (binary variables, integer variables, implicit integer variables, and continuous variables). \\
    & coef          & Normalized objective coefficient $c_j$. \\
    & has\_lb / \_ub & Indicator whether variable has lower/upper bounds. \\
    & sol\_is\_at\_lb / \_ub & Indicator whether LP solution is at lower/upper bound. \\
    & sol\_frac      & Fractionality of LP solution value. \\
    & basis\_status  & One-hot simplex basis status: basic, upper, lower, zero. \\
    & reduced\_cost  & Normalized reduced cost. \\
    & age            & LP age of the variable. \\
    & sol\_val        & LP solution value of the variable. \\
    &inc\_val /avg\_inc\_val &Value/Average value in the incumbent solutions\\
\hline
\end{tabular}}
\vspace{-3mm}
\end{table*}

\begin{table*}[tb]
\centering

\caption{Relationship between MILP and \lt~ vertex features. The notation $\mathbb{B} \leftrightarrow \mathbb{Z}$ denotes the potential mutual conversion between binary variables and integer variables.}
\vspace{-3mm}
\label{tab:ltmilp}
\begin{tabular}{p{3.2cm}<{\centering}p{2.2cm}<{\centering}p{4.5cm}<{\centering}}
\hline
\textbf{Vertex feature} & \textbf{MILP} & \lt~ \\
\hline
\multicolumn{3}{c}{\textbf{Constraint vertex features (constraint $i$)}} \\
\hline
obj\_cos\_sim & $C_{i,1}$ & $C_{i,1}$ \\
bias & $C_{i,2}$ & $C_{i,2} + a_i^{\top} t/||a_i||$ \\
is\_tight & $C_{i,3}$ & $C_{i,3}$ \\
dualsol\_val & $C_{i,4}$ & $C_{i,4}$ \\
age & $C_{i,5}$ & $C_{i,5}$ \\
\hline
\multicolumn{3}{c}{\textbf{Edge features (edge $(i, j)$ of constraint $i$ and variable $j$)}} \\
\hline
coef&$E_{i,j}$&$T_{jj}E_{i,j}$\\
\hline
\multicolumn{3}{c}{\textbf{Variable vertex features (variable $j$)}} \\
\hline 
type & $V_{j,1}$ & $V_{j,1}$ or $\mathbb{B} \leftrightarrow \mathbb{Z}$ \\
coef & $V_{j,2}$ & $T_{jj}V_{j,2}$ \\
has\_lb & $V_{j,3}$ & if $T_{jj}=1$, $V_{j,3}$, else $V_{j,4}$ \\
has\_ub & $V_{j,4}$ & if $T_{jj}=1$, $V_{j,4}$, else $V_{j,3}$ \\
sol\_is\_at\_lb & $V_{j,5}$ & if $T_{jj}=1$, $V_{j,5}$, else $V_{j,6}$ \\
sol\_is\_at\_ub & $V_{j,6}$ & if $T_{jj}=1$, $V_{j,6}$, else $V_{j,5}$ \\
sol\_frac & $V_{j,7}$ & if $T_{jj}=1$, $V_{j,7}$, else 1 - $V_{j,7}$ (\texttt{mod} 1) \\
basis\_status & $V_{j,8}$ & $V_{j,8}$ \\
reduced\_cost & $V_{j,9}$ & $T_{jj}V_{j,9}$ \\
age & $V_{j,10}$ & $V_{j,10}$ \\
sol\_val & $V_{j,11}$ & $T_{jj}V_{j,11} + t_j$ \\
inc\_val & $V_{j,12}$ & $T_{jj}V_{j,12} + t_j$ \\
avg\_inc\_val & $V_{j,13}$ & $T_{jj}V_{j,13} + t_j$ \\
\hline
\end{tabular}
\vspace{-2mm}
\end{table*}

\subsection{Graph Features of \lt~  }
The relationships between MILP and \lt~ vertex features are illustrated in Table~\ref{tab:ltmilp}.

Specifically, consider the linear transformation
\[
\setlength\abovedisplayskip{2pt}
\setlength\belowdisplayskip{2pt}
\hat{\mathbf{x}} = \phi(\mathbf{x}) = T \mathbf{x} + \mathbf{t},
\]
where \(T\) is a diagonal matrix with entries \(\pm 1\), and \(\mathbf{t}_{\mathcal{I}} \in \mathbb{Z}^k\). 
Let the original LP relaxation of MILP~\eqref{eq:milp} be
\[
\setlength\abovedisplayskip{2pt}
\setlength\belowdisplayskip{2pt}
\min_{\mathbf{x}} \ \mathbf{c}^\top \mathbf{x} \quad \text{s.t.} \quad A \mathbf{x} \le \mathbf{b}, \ \mathbf{l} \le \mathbf{x} \le \mathbf{u}.
\]
After the transformation, the LP becomes
\[
\setlength\abovedisplayskip{2pt}
\setlength\belowdisplayskip{2pt}
\min_{\hat{\mathbf{x}}} \ \hat{\mathbf{c}}^\top \hat{\mathbf{x}} \quad \text{s.t.} \quad \hat{A} \hat{\mathbf{x}} \le \hat{\mathbf{b}}, \ \hat{\mathbf{l}} \le \hat{\mathbf{x}} \le \hat{\mathbf{u}},
\]
with
\(\hat{\mathbf{c}} = T^\top \mathbf{c}\), \(\hat{A} = A T\), \(\hat{\mathbf{b}} = \mathbf{b} + A T\mathbf{t}\), \(\hat{\mathbf{l}} = T \mathbf{l} + \mathbf{t}\), \(\hat{\mathbf{u}} = T \mathbf{u} + \mathbf{t}\).

\stitle{Dual Solution Value (\texttt{dualsol\_val}).} Let \(\boldsymbol{y}\) denote the dual variables corresponding to \(A \mathbf{x} \le \mathbf{b}\). 
The dual LP is
\[
\max_{\boldsymbol{y} \ge 0} \ \mathbf{b}^\top \boldsymbol{y} \quad \text{s.t.} \quad A^\top \boldsymbol{y} + \mathbf{s} = \mathbf{c}, \ \mathbf{s} \ge 0,
\]
where \(\mathbf{s}\) are slack variables for bounds.  

After transformation, the dual variables \(\hat{\boldsymbol{y}}\) satisfy
\[
\hat{A}^\top \hat{\boldsymbol{y}} + \hat{\mathbf{s}} = \hat{\mathbf{c}}
\quad \Rightarrow \quad (A T)^\top \hat{\boldsymbol{y}} + \hat{\mathbf{s}} = T^\top \mathbf{c}.
\]

Since \(T\) is diagonal with \(\pm 1\), we have
\[
A^\top \hat{\boldsymbol{y}} + T \hat{\mathbf{s}} = \mathbf{c}.
\]
Comparing with the original dual, if we set
\[
\hat{\boldsymbol{y}} = \boldsymbol{y}, \quad \hat{\mathbf{s}} = T \mathbf{s},
\]
the dual feasibility is preserved.  

Thus, the dual solution value for each constraint is preserved.

\stitle{Fractionality of LP Solution (\texttt{sol\_frac}).}
Consider an integer variable \(x_j \in \mathcal{I}\) and its fractionality
\[
\texttt{sol\_frac}_j = \mathrm{frac}(x_j),
\]
where \(\mathrm{frac}(\cdot)\) denotes the fractional part.

After the linear transformation
\[
\hat{x}_j = T_{jj} x_j + t_j, \quad T_{jj} \in \{+1,-1\}, \ t_j \in \mathbb{Z},
\]
the fractionality becomes
\[
\hat{\texttt{sol\_frac}}_j = \mathrm{frac}(\hat{x}_j) = \mathrm{frac}(T_{jj} x_j + t_j).
\]

Since adding an integer \(t_j\) does not change the fractional part, we have
\[
\hat{\texttt{sol\_frac}}_j = \mathrm{frac}(T_{jj} x_j).
\]

Now consider the two cases for \(T_{jj}\):

\begin{itemize}
    \item \(T_{jj} = +1\): \(\hat{\texttt{sol\_frac}}_j = \mathrm{frac}(x_j) = \texttt{sol\_frac}_j\). The fractionality remains unchanged.
    \item \(T_{jj} = -1\): \(\hat{\texttt{sol\_frac}}_j = \mathrm{frac}(-x_j)\). Recall that
    \[
    \mathrm{frac}(-x_j) =
    \begin{cases}
    0, & \text{if } \mathrm{frac}(x_j) = 0,\\
    1 - \mathrm{frac}(x_j), & \text{otherwise}.
    \end{cases}
    \]
\end{itemize}

In summary, the transformed fractionality can be expressed as
\[
\hat{\texttt{sol\_frac}}_j =
\begin{cases}
\texttt{sol\_frac}_j, & T_{jj} = +1,\\
1 - \texttt{sol\_frac}_j \ (\text{mod } 1), & T_{jj} = -1.
\end{cases}
\]

\stitle{Reduced Cost (\texttt{reduced\_cost}).} Original reduced cost:
\[
r_j = c_j - A_{:,j}^\top \boldsymbol{y}.
\]
where $A_{:j}$ is the $j$-th column of the constraint matrix, and $y$ denotes the dual vector of the constraints.

After transformation,
\[
\hat{r}_j = \hat{c}_j - \hat{A}_{:,j}^\top \hat{\boldsymbol{y}} = T_{jj} c_j - (A T)_{:,j}^\top \boldsymbol{y} = T_{jj} (c_j - A_{:,j}^\top \boldsymbol{y}) = T_{jj} r_j.
\]

Therefore, \(\texttt{reduced\_cost}\) flips sign if \(T_{jj} = -1\) and remains the same if \(T_{jj} = 1\).

\stitle{Bounds-Related Features.} The features \texttt{has\_lb} and \texttt{has\_ub} are binary indicators of whether a variable has finite lower and upper bounds, respectively. 
Similarly, \texttt{sol\_is\_at\_lb} (\texttt{sol\_is\_at\_ub}) indicates whether the LP solution of a variable coincides with its lower (upper) bound. 

Under the transformation \(\hat{x}_j = T_{jj} x_j + t_j\), the new bounds are given by 
\(\hat{l}_j = T_{jj} l_j + t_j\), \(\hat{u}_j = T_{jj} u_j + t_j\). 
Since the existence of finite bounds is preserved by affine transformations, the values of \texttt{has\_lb} and \texttt{has\_ub} remain unchanged. 
However, the identity of active bounds may switch when \(T_{jj}=-1\): if the solution was originally at \(l_j\), after transformation it corresponds to \(\hat{u}_j\), and vice versa for \(u_j\). 

\stitle{Solution-Related Features.}
The feature \texttt{sol\_val} records the LP solution value of a variable, while \texttt{inc\_val} and \texttt{avg\_inc\_val} represent the variable's value in the current incumbent solution and the average value across all incumbent solutions, respectively.  

Under the linear transformation \(\hat{x}_j = T_{jj} x_j + t_j\), these values transform consistently as
\[
\setlength\abovedisplayskip{2pt}
\setlength\belowdisplayskip{2pt}
\hat{\texttt{sol\_val}}_j = T_{jj} \cdot \texttt{sol\_val}_j + t_j, \quad 
\hat{\texttt{inc\_val}}_j = T_{jj} \cdot \texttt{inc\_val}_j + t_j, \]
\[
\setlength\abovedisplayskip{2pt}
\setlength\belowdisplayskip{2pt}
\hat{\texttt{avg\_inc\_val}}_j = T_{jj} \cdot \texttt{avg\_inc\_val}_j + t_j.
\]

Therefore, these features undergo the same affine transformation as the variables themselves.

\vspace{-2mm}
\subsection{Graph Features of  \rc~ }
After augmenting the MILP with a redundant constraint $a_r^\top x \le b_r$, a new constraint vertex is added to $\mathcal{G}$ while other vertices' features remain unchanged. 

\stitle{Constraint Vertex features.}
For the newly added vertex, its features such as \texttt{obj\_cos\_sim} and \texttt{bias} are computed from $a_r$ and $b_r$, while \texttt{is\_tight} and \texttt{dualsol\_val} are set to zero, since the redundant constraint does not affect the optimal solution boundary and re-solving the LP will not make it tight. \texttt{age} is also set to zero.

\stitle{Edge features.}
The newly added vertex is connected to variable vertices with nonzero coefficients in $a_r$, with \texttt{coef} reflecting the normalized values.



\end{document}